\theoremstyle{plain}
\newtheorem{theorem}{Theorem}[section]
\newtheorem{lemma}[theorem]{Lemma}
\newtheorem{corollary}[theorem]{Corollary}
\theoremstyle{definition}
\newtheorem{definition}[theorem]{Definition}
\theoremstyle{remark}
\newcommand{\cmark}{\ding{51}}%
\newcommand{\xmark}{\ding{55}}%
\newcommand{\edit}[1]{{#1}}
\newcommand{\cubsub}{CUB-sub}
\newcommand{\cub}{CUB}
\newcommand{\clevr}{CLEVR}
\newcommand{\hamfull}{HAM10000}
\newcommand{\ham}{Ham}
\newcommand{\nlp}{Truth}
\newcommand{\nlpsub}{Truth-sub}
\newcommand{\nlptwo}{News}
\newcommand{\ourmethod}{CCE}
\newcommand{\ourmethodfull}{Compositional Concept Extraction}
\newcommand{\cn}[1]{\ensuremath{\small{\text{\texttt{#1}}}}}
\newcommand{\remove}[1]{}
\newcommand{\update}[1]{#1}
\def\eqref#1{equation~\ref{#1}}
\def\1{\bm{1}}
\DeclareMathAlphabet{\mathsfit}{\encodingdefault}{\sfdefault}{m}{sl}
\SetMathAlphabet{\mathsfit}{bold}{\encodingdefault}{\sfdefault}{bx}{n}
\def\sC{{\mathbb{C}}}
\def\sR{{\mathbb{R}}}
\icmltitlerunning{Towards Compositionality in Concept Learning}
\begin{document}

\twocolumn[

\icmltitle{Towards Compositionality in Concept Learning}

\icmlsetsymbol{equal}{*}

\begin{icmlauthorlist}
\icmlauthor{Adam Stein}{p}
\icmlauthor{Aaditya Naik}{p}
\icmlauthor{Yinjun Wu}{pku}
\icmlauthor{Mayur Naik}{p}
\icmlauthor{Eric Wong}{p}
\end{icmlauthorlist}

\icmlaffiliation{p}{Department of Computer and Information Science, University of Pennsylvania, Pennsylvania, USA}
\icmlaffiliation{pku}{School of Computer Science, Peking University, Beijing, China}

\icmlcorrespondingauthor{Adam Stein}{steinad@seas.upenn.edu}

\icmlkeywords{Machine Learning, ICML}

\vskip 0.3in
]

\printAffiliationsAndNotice{} %

\begin{abstract}
Concept-based interpretability methods offer a lens into the internals of foundation models by decomposing their embeddings into high-level concepts. These concept representations are most useful when they are \textit{compositional}, meaning that the individual concepts compose to explain the full sample. We show that existing unsupervised concept extraction methods find concepts which are not compositional. To automatically discover compositional concept representations, we identify two salient properties of such representations, and propose \ourmethodfull{} (\ourmethod{}) for finding concepts which obey these properties.
We evaluate \ourmethod{} on five different datasets over image and text data.
Our evaluation shows that \ourmethod{} finds more compositional concept representations than baselines and yields better accuracy on four downstream classification tasks.
\footnote{Code and data are available at \url{https://github.com/adaminsky/compositional_concepts}.}

\end{abstract}

\vspace{-0.2in}
\section{Introduction}
\label{sec:intro}

Foundation models continue to enable impressive performance gains across a variety of domains, tasks, and data modalities \citep{srivastava2023beyond}.
However, their black-box nature severely limits the ability to debug, monitor, control, and trust them \citep{turpin2024language, tamkin2023evaluating, schaeffer2024emergent}.

\begin{figure}[t!]
    \centering
        \includegraphics[width=\columnwidth]{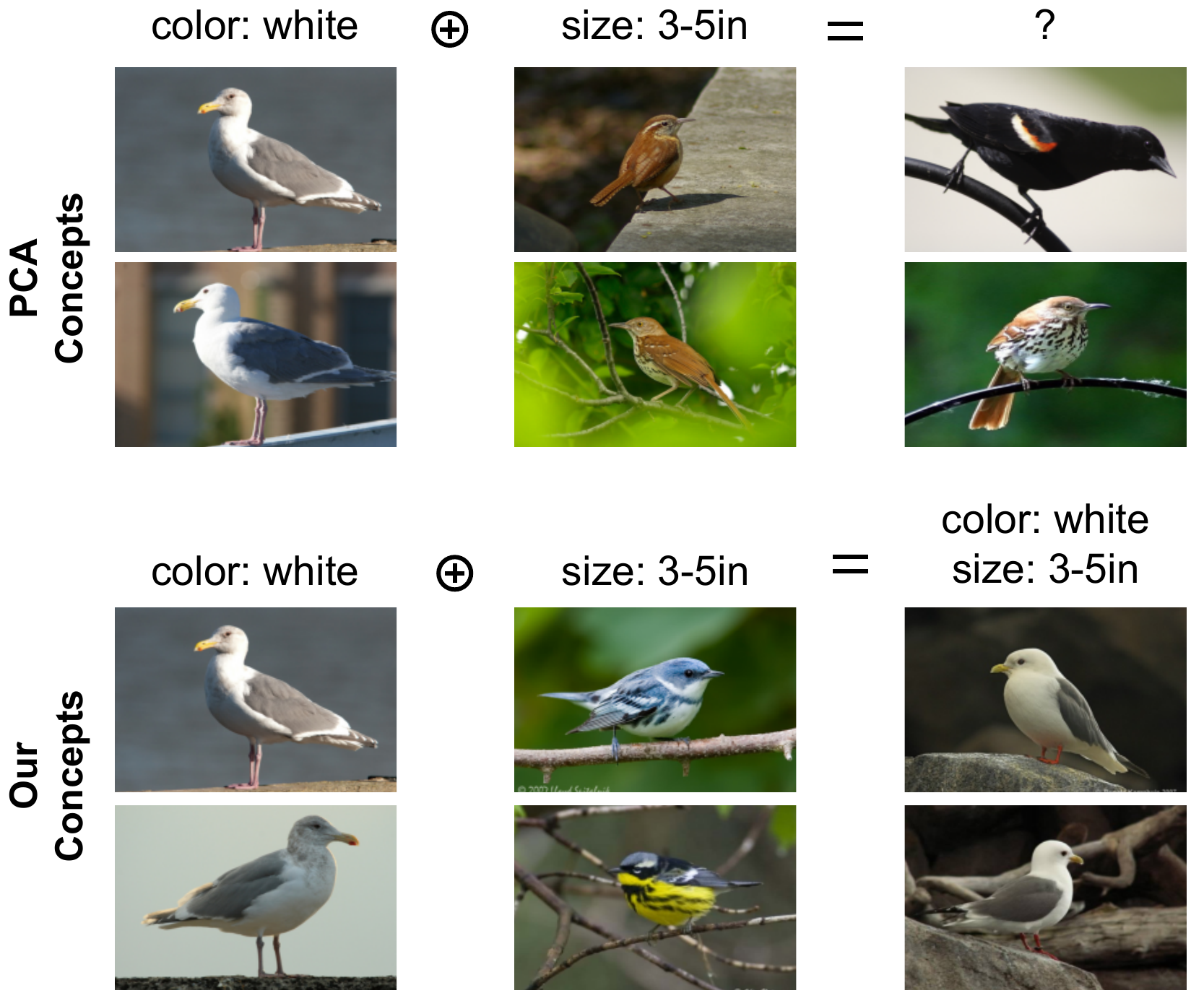}
    \caption{
    We illustrate the issue of concept compositionality with respect to concepts extracted from the embeddings of the CLIP model over the CUB dataset.
    Specifically, we visualize the concepts \cn{white birds} and \cn{small birds} learned by PCA~\cite{repe} and \ourmethod{} along with their compositions.
    We show the top two images that best represent each concept.
    Ideally, composing the \cn{white birds} and \cn{small birds} concepts should result in a concept representing small white birds.
    This is not the case with the concepts learned by PCA.
    On the other hand, the concepts extracted by \ourmethod{} are composable, as shown by the images of small white birds that best represent the resulting concept.
    }
    \label{fig:fig1}
    \vspace{-0.2in}
\end{figure}

Concept-based explanations \citep{tcav, zhou2018interpretable} are a promising approach that seeks to explain a model's behavior using individual concepts such as object attributes (e.g.~\cn{striped}) or linguistic sentiment (e.g.~\cn{happiness}).
Decomposing a model's learned representation can derive these concepts.
For instance, a model's embedding of a dog image may decompose into the \linebreak  sum of concept vectors representing its fur, snout, and tail.

Existing works based on methods such as PCA~\cite{repe} or KMeans~\cite{ace} extract such concept vectors reasonably well for basic concepts.
For instance, Figure~\ref{fig:fig1} shows images from the CUB~\cite{wah2011caltech} dataset containing concepts extracted by PCA from the CLIP~\cite{radford2021learning} model.
These techniques are able to correctly extract the representations of concepts like \cn{white birds} and \cn{small birds},
however, composing them by adding their representations does not yield the representation of the concept of \cn{small white birds}.

The \textit{compositionality} of concepts is vital for several use cases. First, model predictions can be explained by combining concepts \citep{abid2022meaningfully}.
Compositional concepts also allow for editing fine-grained model behavior, like improving the truthfulness of an LLM without compromising other behaviors \citep{repe}.
Models can also be trained to compose basic concepts for new tasks, e.g. using concepts for beak shapes, wing colors, and environments to classify bird species~\citep{pcbm}.

In this paper, we study the unsupervised extraction of compositional concepts.
Existing work does not directly evaluate the compositionality of extracted concepts, but rather focuses on the individual concept representations.
We therefore evaluate the compositionality of concepts extracted by existing unsupervised approaches.

For this purpose, we first validate the compositionality of ground-truth representations of concepts in controlled settings.
We observe that concepts can be grouped into \textit{attributes}, where each attribute consists of concepts over some common property, such as the color of objects or the shape of objects.
Concepts from different attributes (e.g. \cn{blue} and \cn{cube}) can be composed, while those from the same attribute (e.g. \cn{red} and \cn{green}) cannot.
We also observe that the concepts from different attributes are roughly orthogonal, while those from the same attribute are not.
We prove in a generalized setting that these properties are crucial for the compositionality of concepts.
Since existing approaches do not enforce these properties, they often extract non-composable concept representations.

To extract compositional concepts in an unsupervised manner, we propose \ourmethodfull{} (\ourmethod{}).
Our key insight is to search for entire subspaces of concepts at once instead of individual concepts, allowing
\ourmethod{} to enforce the aforementioned properties of compositional concepts.
We show that \ourmethod{} recovers the representation of known compositional concepts better than existing approaches, can discover compositional concepts in existing image and text datasets, and the discovered concepts improve downstream classification accuracy.

We thus summarize the contributions of our paper:
\begin{itemize}[noitemsep,nolistsep, leftmargin=*, topsep=0pt]
    \item We study concept-based explanations of foundation models from the lens of compositionality---a property desirable for many use-cases. We observe that concept representations extracted by state-of-the-art methods fail to compose, and set out to remedy this problem.
    \item We validate that models can in fact represent concepts compositionally in embedding space. We identify two salient properties of compositional concept representations that existing methods fail to satisfy.
    \item We prove in a generalized setting that the identified properties are necessary for compositionality.  We present a novel method called Compositional Concept Extraction (CCE) that guarantees to yield concept representations that satisfy these properties by construction.
    \item We demonstrate that CCE extracts more compositional concepts than baselines on vision and language datasets, and they improve downstream performance.

\end{itemize}

\section{Concepts and Compositionality}

\textit{\textbf{Concept Representations.}}
In machine learning, concepts are symbols that are assigned some human-interpretable meaning, often used to explain predictions made by models.

A concept extractor $E$ extracts concepts from the intermediate representation of some pretrained model $M$ over a dataset $D$.
$E(M, D)$ thus yields a set of \textit{concept vectors} representing the concepts $C = \{ c_1, \ldots, c_i \}$.
Concept vectors are denoted as $R(c)$, where $R: \sC \rightarrow \sR^d$ is the concept representation function, $\sC$ is the set of all possible concepts, and $\sR^d$ is an embedding space in some dimension $d$.
The set of extracted concepts $C$ can be grouped into mutually exclusive \textit{attributes} $A_1, \ldots A_k$ each containing concepts about some common property such that $C = \bigcup_{i=1}^k A_i$.

To measure the presence (or degree of expression) of a concept in a sample's embedding, we borrow the following definition of concept score from \citep{NEURIPS2020_ecb287ff}.

\begin{definition}
    (Concept Score) For a concept $c\in\sC$ and concept representation function $R:\sC\rightarrow \sR^d$, a sample embedding $z\in\sR^d$ has \textit{concept score} $s(z, c) = S_{\cos} (z, R(c))$ where $S_{\cos}$ is the \textit{cosine similarity} function.
\end{definition}

Existing work makes use of concept scores to quantify the presence of concepts on a per-sample basis.
This has uses in several applications, such as creating concept bottleneck models where a sample's embedding is converted to concept scores used for classification \citep{pcbm}, and sorting samples by a concept \citep{tcav}.

\textit{\textbf{Compositionality.}}
Following work on compositional representations \citep{andreas2018measuring} and pretrained embeddings \citep{trager2023linear}, we define the compositionality of concept representations.

\begin{definition}
\label{def:composition}
    (Compositional Concept Representations)
    For concepts $c_i, c_j \in \sC$,
    the concept representation $R:~\sC~\rightarrow~\sR^d$ is compositional if for some $w_{c_i}, w_{c_j} \in \sR^+$,
    \vspace{-0.05in}
    $$
    R(c_i \cup c_j) = w_{c_i}R(c_i) + w_{c_j}R(c_j).
    $$

    \vspace{-0.11in}
    
\end{definition}

In other words, the representation of the composition of concepts corresponds to the weighted sum of the individual concept vectors in the embedding space.

Furthermore, concept scores for the concepts satisfying Definition~\ref{def:composition} also behave compositionally, since each concept score quantifies the presence of that concept in a sample.

\begin{lemma}
\label{lemma:comp-score}
    For compositional concepts $c_i, c_j \in \sC$,
    the concept score of their composition $c_k = c_i \cup c_j$ over a sample embedding $z \in \sR^d$ is the composition of the concept scores of $c_i$ and $c_j$, weighted by $w_{c_i},w_{c_j}\in\sR^+$:
    \vspace{-0.05in}
    \begin{align*}
        s(z, c_k) &= w_{c_i}s(z, c_i) + w_{c_j}s(z, c_j).
    \end{align*}
    \vspace{-0.31in}
\end{lemma}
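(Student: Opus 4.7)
The plan is to unpack both sides using the definitions and exploit the bilinearity of the inner product that underlies cosine similarity. First, I would expand $s(z,c_k) = S_{\cos}(z, R(c_k)) = \langle z, R(c_k)\rangle / (\|z\|\,\|R(c_k)\|)$, and then substitute the compositional identity $R(c_k) = w_{c_i} R(c_i) + w_{c_j} R(c_j)$ given by Definition~\ref{def:composition}. Linearity of the inner product in the second argument immediately gives
\[
\langle z, R(c_k)\rangle \;=\; w_{c_i}\langle z, R(c_i)\rangle + w_{c_j}\langle z, R(c_j)\rangle,
\]
so splitting the fraction over the two summands decomposes $s(z,c_k)$ into a sum of two terms, each involving one of $\langle z, R(c_i)\rangle$ or $\langle z, R(c_j)\rangle$.

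Next, I would reintroduce the missing normalizing factors to recover the cosine similarities $s(z,c_i)$ and $s(z,c_j)$ inside each term. Concretely, I multiply and divide the first term by $\|R(c_i)\|$ and the second by $\|R(c_j)\|$, yielding
\[
s(z,c_k) \;=\; \frac{w_{c_i}\|R(c_i)\|}{\|R(c_k)\|}\, s(z,c_i) \;+\; \frac{w_{c_j}\|R(c_j)\|}{\|R(c_k)\|}\, s(z,c_j).
\]
Finally, I would define $\tilde{w}_{c_i} := w_{c_i}\|R(c_i)\|/\|R(c_k)\|$ and $\tilde{w}_{c_j} := w_{c_j}\|R(c_j)\|/\|R(c_k)\|$, observe that both are strictly positive (as ratios of positive quantities, using $w_{c_i}, w_{c_j} \in \sR^+$ and the non-degeneracy of concept representations so that $\|R(\cdot)\| > 0$), and identify these as the positive weights asserted by the lemma.

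The only subtle point — and what I expect to be the main obstacle in the write-up — is that the weights appearing in the lemma's statement are not literally the same scalars as those in Definition~\ref{def:composition}; they are rescaled by the ratios of norms coming from cosine normalization. The proof is essentially a one-line computation in the inner product, and the care is in being transparent that the coefficients get renormalized (and that this renormalization preserves positivity, so the statement as written still holds with some $w_{c_i}, w_{c_j} \in \sR^+$). If instead the paper intends the exact same weights, the claim requires the additional assumption that $\|R(c_i)\| = \|R(c_j)\| = \|R(c_k)\|$ (e.g., all concept vectors are unit-norm and the composed vector has been renormalized), in which case the rescaling factors reduce to $w_{c_i}$ and $w_{c_j}$ exactly; I would flag this as a remark before concluding.
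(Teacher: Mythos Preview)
Your proposal is correct and matches the paper's own proof essentially line for line: expand the cosine similarity, substitute $R(c_k)=w_{c_i}R(c_i)+w_{c_j}R(c_j)$, split by linearity of the inner product, then multiply and divide by $\|R(c_i)\|$ and $\|R(c_j)\|$ to recover $s(z,c_i)$ and $s(z,c_j)$ with the rescaled coefficients $w_{c_i}\|R(c_i)\|/\|R(c_k)\|$ and $w_{c_j}\|R(c_j)\|/\|R(c_k)\|$. Your observation that these coefficients are renormalized versions of the Definition~\ref{def:composition} weights (rather than the same scalars) is exactly what the paper's derivation shows as well.
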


Since concept scores are used for several downstream tasks discussed above, this property about the compositionality of concept scores can simplify such tasks and improve the overall performance on them.

Besides finding compositional concepts, we also want to explain embeddings based on the concepts which compose it.
Prior work also performs a decomposition into a sum of concept representations \citep{zhou2018interpretable}, but we modify the definition of such a decomposition so that a sample embedding is composed of only the concept representations that are truly present for the sample.

\begin{definition}
\label{def:decomp}
    (Concept-based Decomposition)
    Consider a sample that is associated with a set of concepts $C \subseteq \sC$, such that each attribute $A_i \subseteq C$ contains exactly one concept.
    A concept representation $R: \sC\rightarrow \sR^d$ decomposes that sample's embedding $z_i \in \sR^d$ if it can be expressed as the weighted sum of the sample's associated concepts:

\begin{small}
\vspace{-0.2in}
\begin{align*}
    z_i = \sum_{c \in C} \lambda_{i,c} R(c) \text{, such that } \lambda_{i,j} > 0.
\end{align*}
\vspace{-0.26in}
\end{small}
\end{definition}
As an example, consider the CLEVR dataset~\cite{clevr} consisting of images of objects of different shapes and colors.
A concept extractor for a vision model may extract the set of concepts
$C_{\text{CLEVR}} = \{ \cn{\{red\}}, \cn{\{blue\}}, \cn{\{cube\}}, \cn{\{sphere\}}\}$.
$C_{\text{CLEVR}}$ can also be grouped into attributes $A_1 = \{ \cn{\{red\}}, \cn{\{blue\}} \}$ and $A_2 = \{ \cn{\{cube\}}, \cn{\{sphere\}}\}$ containing color and shape concepts respectively.
As such, a composite concept like \cn{\{red, sphere\}} can be represented as the weighted sum of $R(\cn{\{red\}})$ and $R(\cn{\{sphere\}})$.

\begin{figure*}
\centering
\hfill
\begin{subfigure}[b]{0.5\textwidth}
    \small
    \centering
    \begin{tabular}{lrrrr}
    \toprule
       \rowcolor{gray!20} Method &  CLEVR & CUB-sub & \nlpsub\\
        \midrule

GT & \cellcolor{pink!100} 1.000 $\pm$ 0.000 & \cellcolor{pink!100} 0.808 $\pm$ 0.000 & \cellcolor{pink!100} 0.625 $\pm$ 0.000 \\
PCA & \cellcolor{pink!85} 0.981 $\pm$ 0.000 & \cellcolor{pink!100} 0.663 $\pm$ 0.000 & \cellcolor{pink!42} 0.467 $\pm$ 0.000 \\
ACE & \cellcolor{pink!57} 0.834 $\pm$ 0.029 & \cellcolor{pink!85} 0.651 $\pm$ 0.011 & \cellcolor{pink!100} 0.551 $\pm$ 0.017 \\
DictLearn & \cellcolor{pink!71} 0.891 $\pm$ 0.005 & \cellcolor{pink!71} 0.650 $\pm$ 0.010 & \cellcolor{pink!71} 0.533 $\pm$ 0.006 \\
SemiNMF & \cellcolor{pink!42} 0.780 $\pm$ 0.029 & \cellcolor{pink!42} 0.629 $\pm$ 0.029 & \cellcolor{pink!57} 0.525 $\pm$ 0.050 \\
CT & \cellcolor{pink!28} 0.575 $\pm$ 0.039 & \cellcolor{pink!28} 0.510 $\pm$ 0.003 & \cellcolor{pink!14} 0.428 $\pm$ 0.055 \\
Random & \cellcolor{pink!14} 0.568 $\pm$ 0.087 & \cellcolor{pink!14} 0.445 $\pm$ 0.079 & \cellcolor{pink!28} 0.461 $\pm$ 0.034 \\
\ourmethod{} & \cellcolor{pink!100} 1.000 $\pm$ 0.000 & \cellcolor{pink!57} 0.648 $\pm$ 0.008 & \cellcolor{pink!85} 0.545 $\pm$ 0.004 \\

    \bottomrule
    \end{tabular}
    \caption{MAP score of predicting concept compositions.%
    }
    \label{tab:comp-ap}
\end{subfigure}
\hfill
  \begin{subfigure}[b]{0.4\textwidth}
  \centering
    \includegraphics[width=0.65\columnwidth]{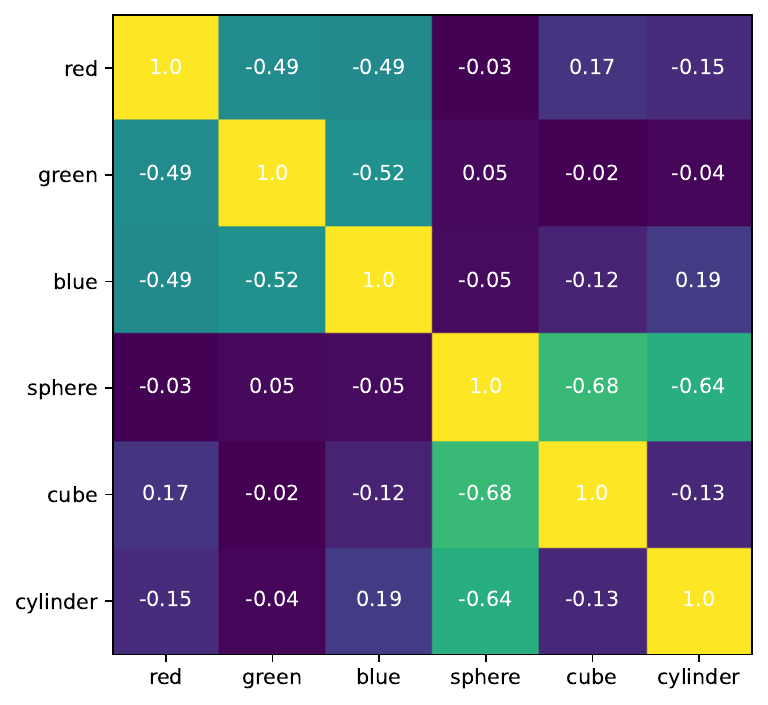}
    \caption{Cosine similarities between \clevr{} concepts.}
    \label{fig:clevr-gt}
  \end{subfigure}
\hfill
    \caption{Compositionality of ground-truth concepts compared with concepts extracted by existing approaches and \ourmethod{}. Figure~\ref{tab:comp-ap} shows that the ground-truth concepts (GT) are quite compositional, but existing methods are not.
    Figure~\ref{fig:clevr-gt} shows the cosine similarities between pairs of ground-truth concepts for the \clevr{} dataset. The darker blue cells represent concepts that are orthogonal, while the lighter yellow ones represent non-orthogonal ones. We observe that concepts tend to be more orthogonal if they belong to different attributes.}
    \label{fig:gt-orth-all}
    \vspace{-0.1in}
\end{figure*}

\section{Evaluating Concept Compositionality}\label{sec: concept_compose}

In this section, we validate the compositionality of ground-truth concept representations and evaluate the same for concepts extracted using existing approaches. 
We first discuss our controlled setting and show that concept representations from the CLIP model are compositional.
We then evaluate the compositionality of concepts extracted by existing approaches.
Finally, we outline the necessary properties of compositional concept representations.

\subsection{Setup}\label{sec: motivation_setup}

In order to validate the compositionality of ground-truth concepts, we focus on concepts extracted from subsets of the \clevr{} \citep{clevr}, \cub{} \citep{wah2011caltech}, and \nlp{} \citep{azaria2023internal} datasets, all of which have labelled attributes with compositional structure.

We follow a setup similar to \cite{lewis2022does} for the synthetic CLEVR~\cite{clevr} dataset and consider images with single objects labelled as one of three shapes (sphere, cube, or cylinder) and one of three colors (red, green, or blue).
We also consider a subset of the CUB dataset consisting of bird images labelled as one of three colors and one of three sizes.
Finally, we consider a subset of the \nlp{}~\cite{zou2023representation} dataset consisting of facts relating to one of three topics and labelled true or false.

\subsection{Ground-Truth Concept Compositionality}
\label{sec:gt-comp}

We evaluate the compositionality of ground-truth concept representations learned by the CLIP model over each labelled dataset.
Since these representations are not provided, for each concept, we consider the mean of the model's embeddings for samples belonging to that concept as a surrogate of its true representation \citep{repe}.

For example, for the CLEVR dataset, we extract the ground-truth representation of the \cn{red} concept by calculating the mean of all sample embeddings of images with red objects.
We similarly extract the ground-truth representations for the other two color concepts, the three shape concepts, and composite concepts like \cn{\{red, sphere\}}, for a total of 15 concepts.
We repeat this process for each dataset.

As stated in Lemma~\ref{lemma:comp-score}, the concept score for a composite of two concepts is the weighted sum of the concept scores of each concept.
This implies that a linear model should be able to predict the concept score for a composed concept given the concept scores for each of the base concepts.
We thus train a linear model to predict the presence or absence of a composed concept given its base concepts.
We measure the average precision of the model for each composed concept, and report the mean average precision (MAP) score in Table~\ref{tab:comp-ap} for each dataset.
We see that in all cases, the ground truth (GT) concepts have high MAP (up to 0.971 for \clevr{}) when predicting concept compositions from their components, meaning the ground-truth concept representations are reasonably compositional.

\subsection{Compositionality Issues with Existing Methods}
\label{sec:comp-issues}

We next study the compositionality of concept representations discovered by existing unsupervised concept extraction methods.
We train a linear model similar to the one described in Section~\ref{sec:gt-comp}, but with concepts extracted by baseline methods instead of the ground truths.
From the MAP results in Table~\ref{tab:comp-ap} we see that all the baselines have significantly lower compositionality than the ground-truth.

This is the case even for techniques that extract the concepts reasonably well, i.e. where the extracted concepts are able to discriminate between positive and negative samples of that concept.
For each dataset and concept extraction method, we calculate the ROC-AUC score to measure the ability of the extracted concept to perform such a discrimination.
We provide the full ROC-AUC results in Appendix~\ref{app:roc-auc}.
In the case of NMF, despite this score averaging as high as 0.907 for the \clevr{} dataset, the extracted concepts are not compositional.
This implies that finding concept representations simply based on their ability to discriminate positive and negative samples of a concept does not mean that those representations will compose as expected.

We further demonstrate this point with a toy illustration in Figure~\ref{fig:incorrect-comp}. This figure depicts four perfectly composed concepts at the top, and four incorrectly composed concepts at the bottom, even though each concept is perfectly discriminative of the samples with the concept. 
Therefore, we must ensure that we explicitly extract compositional concepts.

\begin{figure}[t]
    \centering
    \includegraphics[width=0.65\columnwidth]{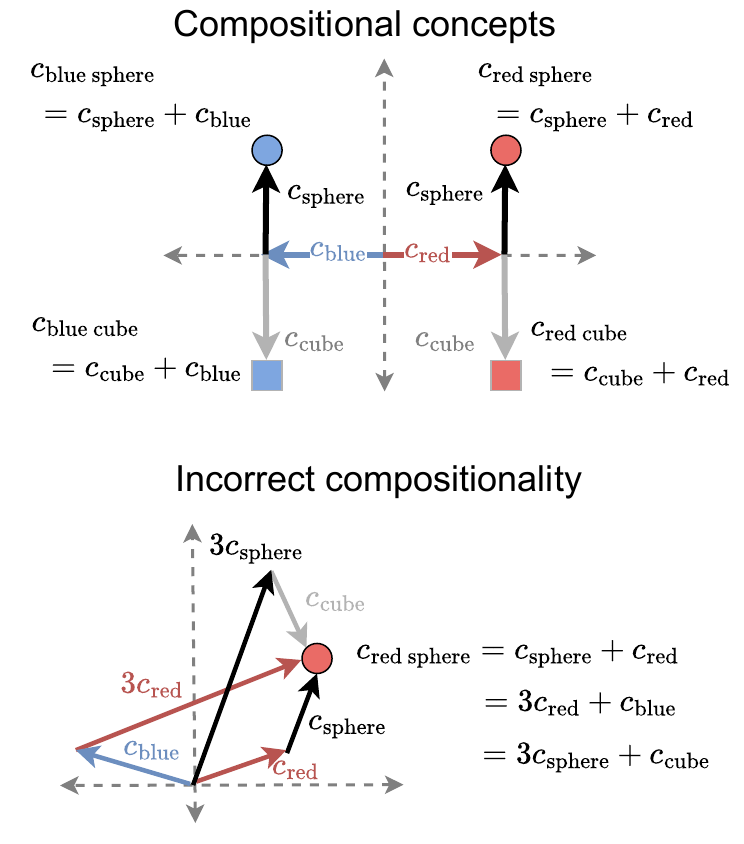}
    \vspace{-0.1in}
    \caption{
    Illustration of concepts on a dataset of cubes and spheres that are either red or blue. The concepts on the top are compositional while those on the bottom are not. Even though the concepts on the bottom can perfectly represent the four samples, they still fail to compose properly. For instance, the composition of the \cn{red} and \cn{blue} concepts can form the \cn{\{red, sphere\}} concept even though the \cn{blue} concept is not present in a red sphere.}
    \label{fig:incorrect-comp}
    \vspace{-0.15in}
\end{figure}

\subsection{Desired Properties of Compositional Concepts}

To extract compositional concepts, we must first identify characteristics of such concepts.
Since the ground-truth concepts were compositional, we investigate the salient characteristics of those concepts.

Consider the ground-truth concepts for the \clevr{} dataset.
In order to understand the relationship between different ground-truth concepts and their compositionality, we center the sample embeddings and visualize cosine similarities between pairs of these concepts in Figure~\ref{fig:clevr-gt}.
We observe that the ground-truth representations of color concepts are roughly orthogonal (cosine similarity near 0) to those of shape concepts.
In contrast, the representations of concepts within the same attribute, such as the \cn{red} and \cn{blue} concepts, are non-orthogonal.
Furthermore, the orthogonal concepts are also those that can compose to form new concepts, since they lie in different attributes.
For instance, the \cn{red} and \cn{sphere} concepts are orthogonal, and can compose to form the \cn{\{red, sphere\}} concept, while the \cn{red} concept can't compose with the \cn{blue} concept.

We visualize the same for the \cubsub{} and \nlpsub{} datasets in  Appendix~\ref{sec:comp-gt}, and empirically observe the following trend over all three datasets:
concept representations from different attributes are roughly orthogonal while those from the same attribute are non-orthogonal.
Also, the orthogonal concepts tend to be compositional, while the non-orthogonal ones can't be composed.

Orthogonality is a generally helpful property for several use cases, such as disentangling concepts in embedding space \citep{chen2020concept}.
Some approaches therefore try to enforce orthogonality on the concepts being extracted.
Table~\ref{tab:properties} summarizes existing unsupervised approaches for concept extraction
and whether the method enforces any orthogonality constraints (Ortho.) between concepts of different attributes and allows for non-orthogonality between those of the same attribute (Corr.).
We see that these approaches allow for only one of the two, but not both.

\begin{table}
\small
    \centering
    \caption{Properties enforced by unsupervised concept extraction.}
    \label{tab:properties}
    \begin{tabular}{p{1.25cm}p{3.5cm}ll}
    \toprule
        Method & Example & Ortho. & Corr.\\
        \midrule
        PCA & RepE \citep{repe} & \cmark & \xmark\\
        KMeans & ACE \citep{ace} & \xmark & \cmark\\
        Dictionary-Learning & TransformerVis \citep{yun2021transformer} & \xmark & \cmark\\
        NMF & CRAFT \citep{craft} & \xmark & \cmark\\
        Custom & Concept Tf \citep{rigotti2021attention} & \xmark & \cmark\\
        \midrule
        Custom & \ourmethod{} (Ours) & \cmark & \cmark\\
    \bottomrule
    \end{tabular}
\end{table}

We now formally prove that the observed properties regarding concept compositionality hold in a generalized setting.

\begin{theorem}
\label{theorem}
    For some dataset, \edit{consider two attributes $A$ and $A'$ where $A$ has $l$ concepts $c_1, \dots c_{l}$ and $A'$ has $l'$ concepts $c'_1, \dots c'_{l'}$.} %
    Assuming that for each compositional concept \edit{$c=\{c_{i},c'_{j}\}$}, its representation \edit{$v_{i,j}$}, follows a spherical normal distribution with zero mean and unit covariance, i.e. \edit{$v_{i,j}$} $\sim N(\mathbf{0}, \mathbf{I}^d)$,
    the following statements are true with high probability for a large dimension $d$:
    \begin{itemize}[noitemsep,nolistsep, leftmargin=*]
        \item \edit{There exists $c_1, c_2 \in A$ and $c'_1, c'_2 \in A'$ such that the representations of these base concepts are non orthogonal.}
        \item \edit{For all $c_1\in A$ and $c_2\in A'$, the representations of $c_1$ and $c_2$ are} orthogonal with high probability.
    \end{itemize}
\end{theorem}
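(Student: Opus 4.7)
The plan is to reduce both claims to second-moment calculations on the iid compositional representations, combined with Gaussian concentration in high dimension. Following the surrogate construction used in Section~\ref{sec:gt-comp}, I take the base concept representations to be the within-attribute averages $R(c_i) = \frac{1}{l'}\sum_{j=1}^{l'} v_{i,j}$ for $c_i \in A$ and $R(c'_j) = \frac{1}{l}\sum_{i=1}^{l} v_{i,j}$ for $c'_j \in A'$, centered by the global mean $\bar{v} = \frac{1}{ll'}\sum_{i,j} v_{i,j}$, matching the centering already used to produce Figure~\ref{fig:clevr-gt}.

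The first step is a clean moment calculation based on $\E[\langle v_{i_1,j_1}, v_{i_2,j_2}\rangle] = d\cdot\mathbf{1}[(i_1,j_1)=(i_2,j_2)]$. For two base concepts $c_1, c_2 \in A$, the raw averages aggregate disjoint index sets so $\E[\langle R(c_1), R(c_2)\rangle] = 0$, but centering by $\bar{v}$ introduces a coupling and the expansion gives $\E[\langle R(c_1)-\bar{v},\, R(c_2)-\bar{v}\rangle] = -d/(ll')$. For a cross-attribute pair $c \in A$, $c' \in A'$, the two averages share exactly the single term $v_{c,c'}$, contributing $d/(ll')$ to the raw inner product; this is cancelled exactly by the three centering cross-terms, yielding mean zero. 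A parallel calculation gives $\E[\|R(c_i)-\bar{v}\|^2] = d(l-1)/(ll')$ and analogously for $A'$, so the asymptotic cosine similarities are $-\frac{1}{l-1}$ within $A$, $-\frac{1}{l'-1}$ within $A'$, and $0$ across attributes.

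The second step is concentration. Each of the inner products and squared norms above is a quadratic form in the iid standard Gaussians $v_{i,j}$, so the Hanson--Wright inequality gives deviations of order $O(1/\sqrt{d})$ from the mean with exponentially small failure probability. A union bound over the $l\cdot l'$ cross-attribute pairs then establishes the ``for all'' quantifier in the second statement, and the bounded-away-from-zero constant $-\frac{1}{l-1}$ immediately yields the existence claim in the first statement (which in fact holds for every within-attribute pair).

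The main obstacle I expect is careful bookkeeping of the centering. The within-attribute non-orthogonality is driven entirely by the coupling that $\bar{v}$ introduces, and the exact cancellation across attributes requires the centering to remove precisely the shared-$v_{c,c'}$ contribution. Getting both the constant $-\frac{1}{l-1}$ and the exact zero in the cross-attribute mean correct, rather than a subleading $O(1/(ll'))$ term that would cause the cross-attribute cosine to decay only like $1/\sqrt{ll'}$ and fail the ``orthogonal with high probability'' conclusion, is the delicate place where the argument has to be handled rigorously.
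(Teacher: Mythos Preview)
Your proposal is correct, and for the cross-attribute orthogonality it follows essentially the same inner-product/norm computation as the paper, only swapping the paper's two high-dimensional lemmas (near-orthogonality of random Gaussian vectors and the Gaussian Annulus Theorem) for a single Hanson--Wright application; the resulting $O(1/\sqrt{d})$ rate is the same.

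Where you genuinely diverge is the within-attribute claim. The paper does not compute moments at all there: it observes, purely algebraically, that after centering the base-concept representations in an attribute satisfy $\sum_{i=1}^{l} v_i = 0$, so they are linearly dependent and hence cannot all be pairwise orthogonal. This argument is deterministic, one line, and does not use the Gaussian assumption. Your route instead computes $\E[\langle R(c_1)-\bar v,\,R(c_2)-\bar v\rangle]=-d/(ll')$ and concentrates, which is more work but buys you a strictly stronger conclusion: every within-attribute pair has cosine similarity converging to the explicit constant $-1/(l-1)$, not merely the existence of one non-orthogonal pair. Both arguments are valid; the paper's is more elementary for the stated existence claim, while yours is more informative and dovetails better with the quantitative picture in Figure~\ref{fig:clevr-gt}.
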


We show the proof in Appendix~\ref{app:thm}. The takeaway from this result is that compositional concepts will be roughly orthogonal, while concepts of the same attribute may not be orthogonal. In addition, we show in Corollary~\ref{thm:reverse-corollary} that given concepts which follow the consequent of the above theorem, that the concepts will have compositional concept representations, meaning the representations of composite concepts consist of a sum of their component base concept representations, as defined in Definition~\ref{def:composition}. We leverage this to design an unsupervised concept extraction method which can find compositional concepts when they exist.

\section{Compositional Concept Extraction (\ourmethod)}

\begin{figure*}
    \centering
    \includegraphics[width=0.7\textwidth]{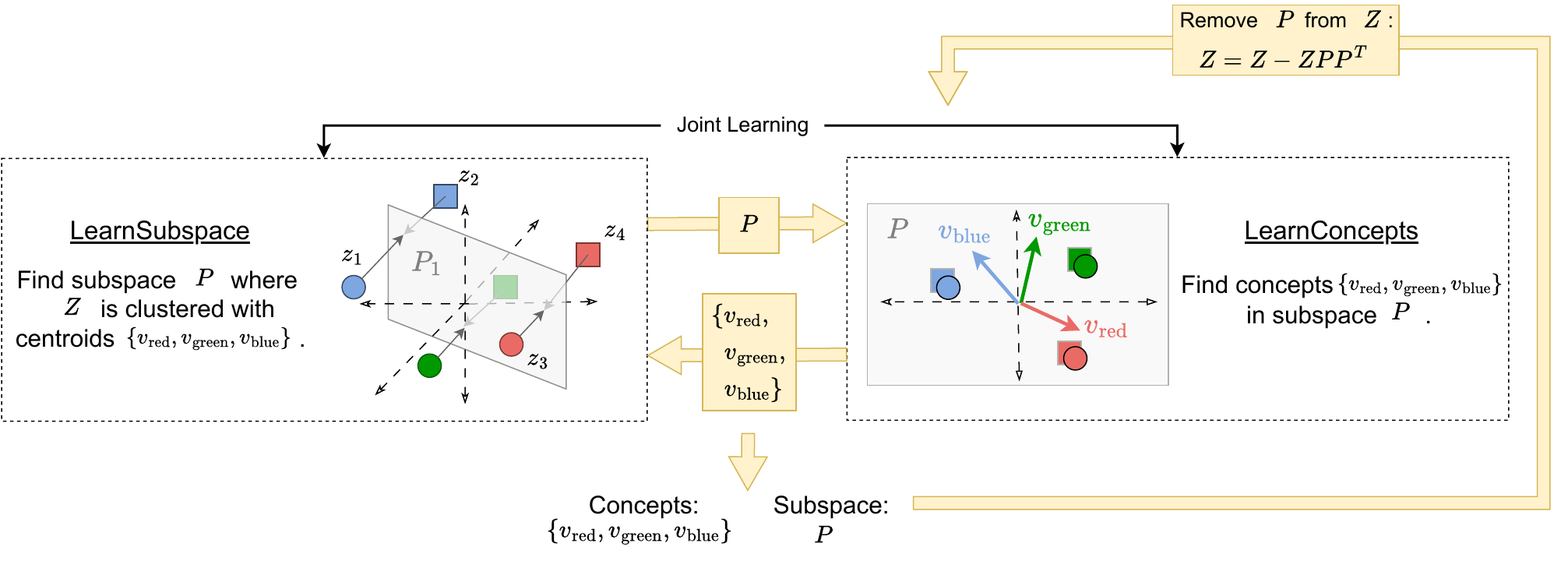}
    \vspace{-0.1in}
    \caption{Finding color concepts in one iteration of \ourmethod{}, which can be proceeded by finding other concepts, such as shapes.}
    \label{fig:method}
    \vspace{-0.1in}
\end{figure*}

\begin{small}
\begin{algorithm}[ht]
\small
   \caption{\ourmethodfull{}}
   \label{alg:method}
\begin{algorithmic}
   \STATE {\bfseries Input:} embeddings $Z$, num. attr. $M$, concepts per attr. $K$, subspace dimension $S$
   \STATE Initialize concepts $C=\{\}$
   \FOR{$m=1\dots M$}
        \STATE Initialize $P\in\sR^{d\times S}$ such that $P^TP=I$.
        \STATE Initialize $K$ concepts $V = \{v_1,\dots, v_K\}$.
        \REPEAT
            \STATE $P = \text{LearnSubspace}(P, Z, V)$
            \STATE $V = \text{LearnConcepts}(ZP, K)$
        \UNTIL{Converged}
   \STATE $C = C \cup V$
   \STATE $Z = Z - ZPP^T$
   \ENDFOR
   \STATE Return $C$
\end{algorithmic}
\end{algorithm}    
\end{small}

To achieve this orthogonality property between concepts, we propose \ourmethod{}, summarized in Algorithm \ref{alg:method} and visualized in Figure~\ref{fig:method}. As the outer loop of the algorithm suggests, once we find concepts for an attribute in a subspace $P$, we remove that subspace using orthogonal rejection and find concepts in a new subspace. 
This enforces orthogonality between the discovered subspaces, thus respecting the orthogonality property described in Section 3. 

To discover concepts within each attribute, we employ a two-step process consisting of LearnSubspace and LearnConcepts, as illustrated in Figure~\ref{fig:method}. The LearnSubspace step, shown on the left, is given a clustering of the data (in terms of centroids $V$) and optimizes a subspace, defined by $P\in\sR^{d\times S}$, so that the data in this subspace ($ZP$) becomes well clustered according to the fixed centroids $V$. In the next step, LearnConcepts, shown on the right, we identify concepts by performing spherical K-Means clustering on $ZP$, the data within subspace $P$.

This clustering process is performed within a learned subspace and the subspace is learned according to the learned clustering. Therefore, we jointly learn the subspace $P$ and the clustering centroids $V$.
Specifically, for LearnSubspace,
we employ the Silhouette score \citep{rousseeuw1987silhouettes} to quantify how well clustered the projected data $ZP$ is given a cluster assignment $L$ determined by the centroids from spherical K-Means clustering. The Silhouette score measures the ratio of average within cluster distance to average between cluster distance. Since the Silhouette score is differentiable, once we fix a clustering $L$ from LearnConcepts, we perform a step of gradient ascent in LearnSubspace to increase the Silhouette score. Thus, we solve the following optimization problem by iteratively fixing $P$ to learn $L$ (with LearnConcepts) and then fixing $L$ to learn $P$ by a gradient step (with LearnSubspace) until convergence:

\begin{small}
\vspace{-0.28in}
\begin{align*}
    \arg\max\nolimits_{P, L} \text{Sil}(ZP, L).
\end{align*}    
\vspace{-0.3in}
\end{small}

We further observe that simply maximizing the above objective leads to overfitting issues since projecting the learned cluster centroids from LearnConcepts back to the original space may not necessarily correspond to cluster centroids in the original space. 
Therefore, in the LearnSubspace step we additionally try to match the cluster centroids learned within the subspace and projected out to the original space to the centroids of the clusters in the original space. This is integrated into the above full objective function as a regularization term, i.e.:

\begin{small}
\vspace{-0.25in}
\begin{align*}
    \arg\max\nolimits_{P, L} \left(\text{Sil}(ZP, L) + \sum\nolimits_k S_{\cos{}}(C_k P^T, \hat{C}_k)\right),
\end{align*}    
\end{small}
where $C_k$ represents the clustering centroids in the subspace $P$ while $\hat{C}_k = \frac{1}{\sum\nolimits_i \mathbbm{1}[L_i = k]}\sum\nolimits_i \mathbbm{1}[L_i = k] Z_i$ represents the clustering centroids in the original space.

\section{Experiments}
\label{sec:exp}

\subsection{Experimental Setup}\label{sec: exp_setup}
\textbf{Datasets and Models.}
We evaluate using five datasets across vision and language settings: CLEVR~\cite{clevr} (vision), \cub\ \citep{wah2011caltech} (vision),  \hamfull\  \citep{tschandl2018ham10000} (vision), \nlp\ \citep{zou2023representation} (language), and \nlptwo\ \citep{misc_twenty_newsgroups_113} (language). We perform experiments on both {\em controlled} and {\em full} settings.
In the controlled setting, we follow the same configuration as Section \ref{sec: motivation_setup} for the \clevr{}, \cub\ and \nlp\ datasets. Further information on our datasets is included in Appendix~\ref{app:datasets}.
The {\em full setting} considers all samples from the \cub, \ham, \nlp, and \nlptwo\ datasets.

For the image datasets, we obtain sample representations from the CLIP model \citep{radford2021learning} while for the NLP dataset, this is achieved with Llama-2 13B Chat model \citep{touvron2023llama}. We also perform ablation studies on the choices of different models in Appendix~\ref{app:ablation}.

\textbf{Baseline Methods.}
Since the concept representations are learned by \ourmethod{} in an unsupervised manner, we therefore primarily compare \ourmethod{} against the following state-of-the-art unsupervised concept extraction methods, i.e., PCA \citep{repe}, NMF \citep{craft}, ACE (KMeans) \citep{ace}, and Dictionary Learning \citep{bricken2023monosemanticity, yun2021transformer}. In addition, we include a Random baseline where we randomly initialize concept vectors from a normal distribution of mean zero and variance one.

Recent studies like Concept Transformer \cite{rigotti2021attention} explore how to jointly learn concept representations and perform training of downstream classification tasks with learned concept representations. Hence, we treat Concept Transformer (Concept Tf) \cite{rigotti2021attention} as
another baseline. Note that Concept Tf can optionally incorporate concept labels as additional supervisions, which are not considered in our experiments for fair comparison.

\begin{figure*}[ht!]
    \centering
    \begin{subfigure}[b]{0.49\textwidth}
        \centering
        \includegraphics[width=0.8\textwidth]{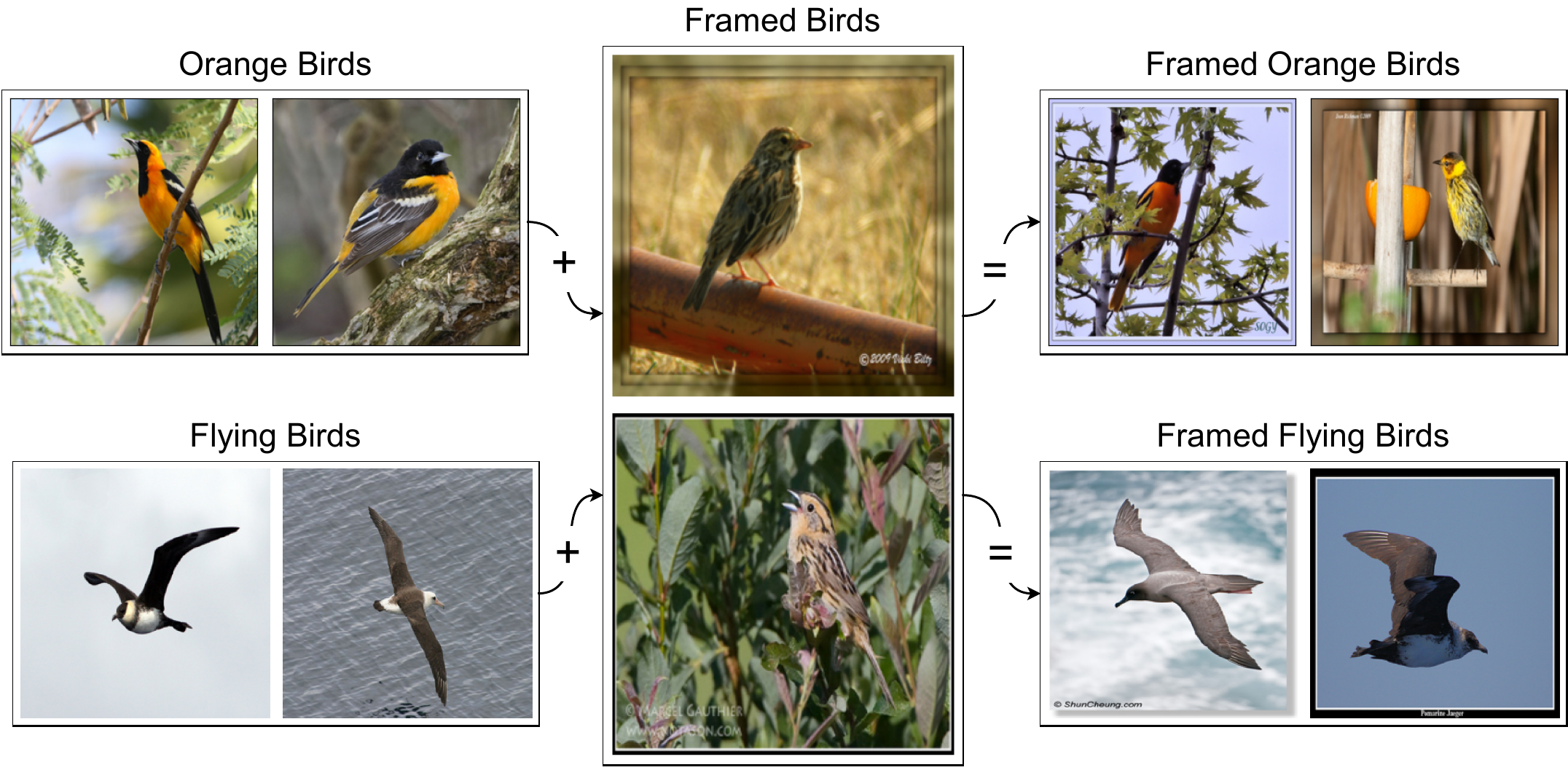}
        \caption{}
        \label{fig:qual_cub_1}
    \end{subfigure}%
    \hfill
    \begin{subfigure}[b]{0.49\textwidth}
        \centering
        \includegraphics[width=0.8\textwidth]{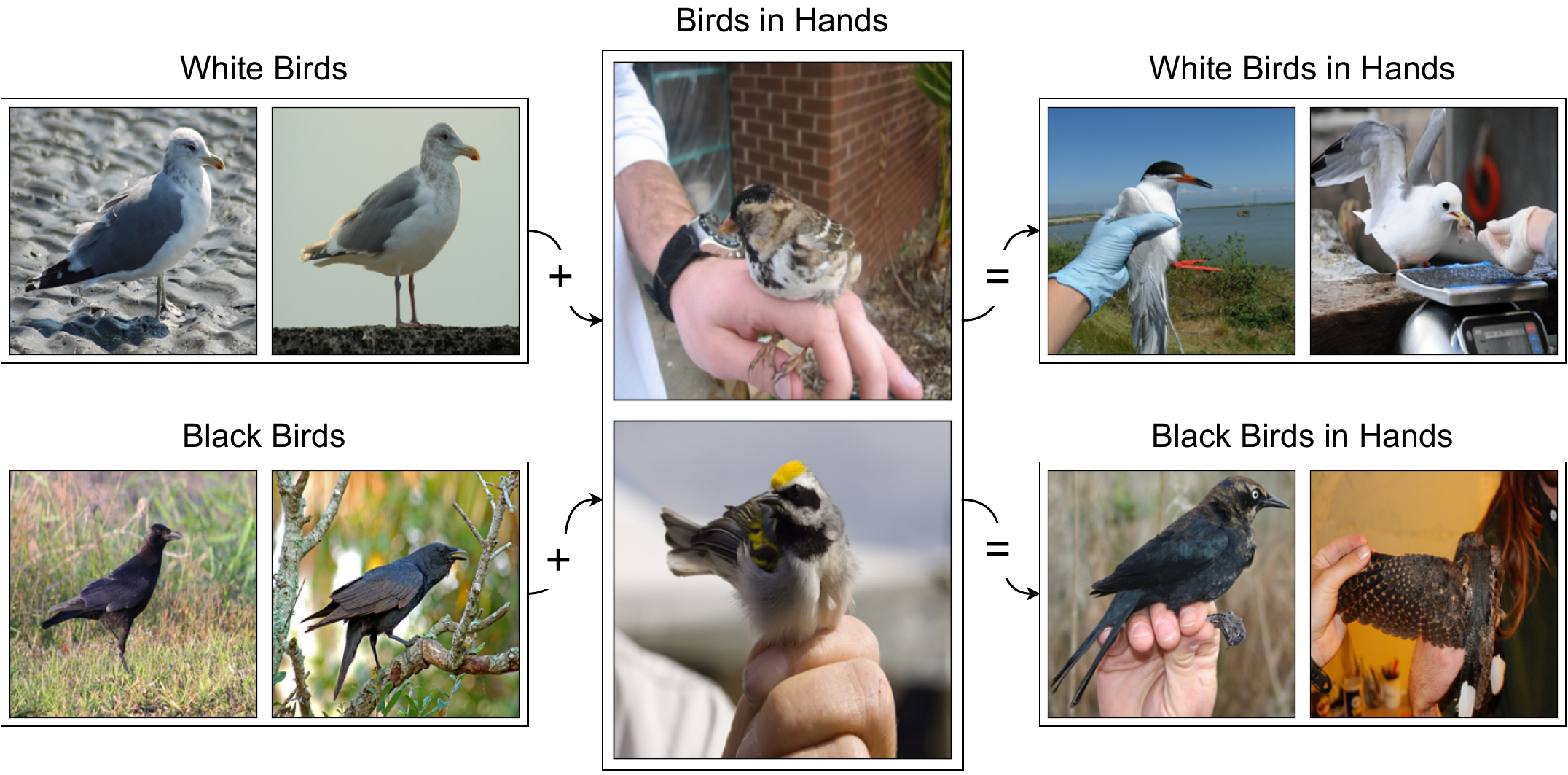}
        \caption{}
        \label{fig:qual_cub_2}
    \end{subfigure}%
    
    \begin{subfigure}[b]{0.49\textwidth}
        \centering
        \includegraphics[width=\textwidth]{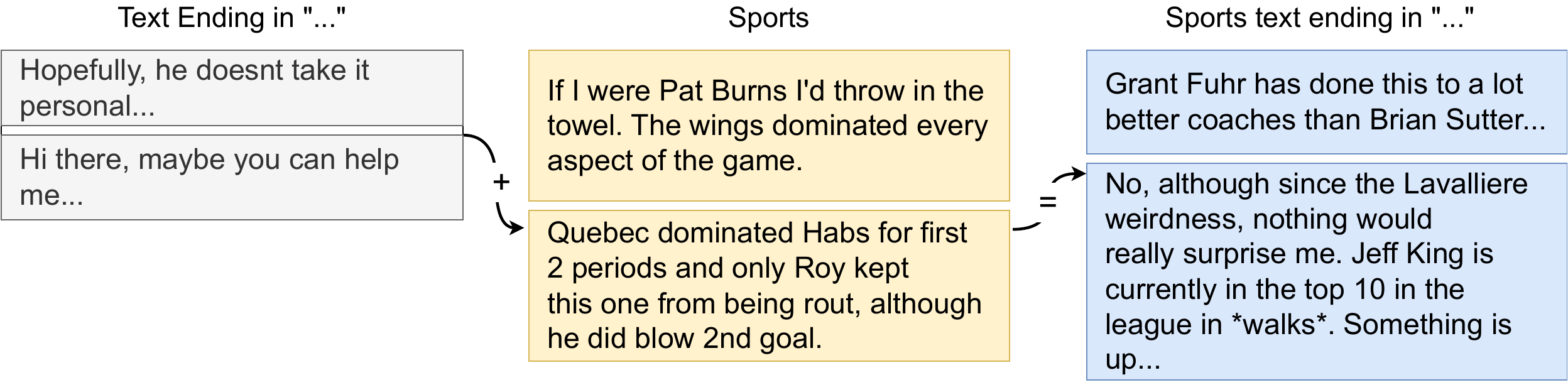}
        \caption{}
        \label{fig:qual_news_1}
    \end{subfigure}%
    \hfill
    \begin{subfigure}[b]{0.49\textwidth}
        \centering
        \includegraphics[width=\textwidth]{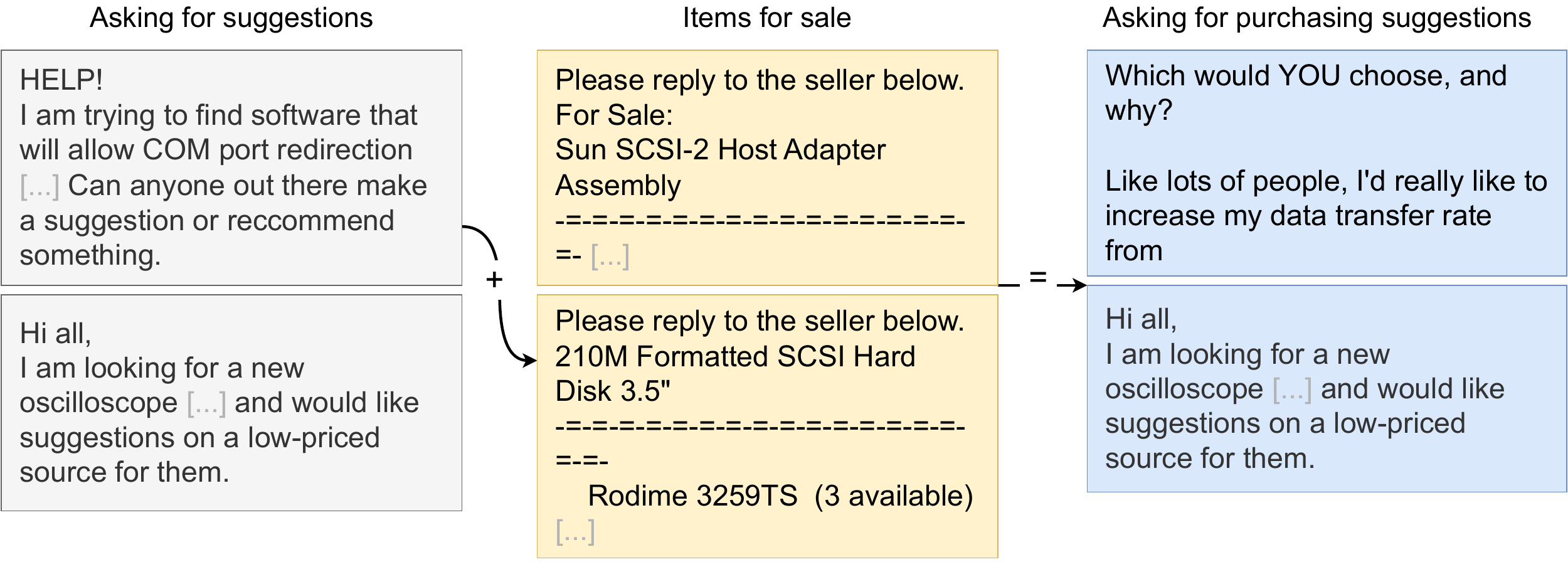}
        \caption{}
        \label{fig:qual_news_2}
    \end{subfigure}%
    \vspace{-0.15in}
    \caption{Examples of compositional concepts identified by \ourmethod{}. Figures \ref{fig:qual_cub_1} and \ref{fig:qual_cub_2} are from the \cub\ dataset while Figures \ref{fig:qual_news_1} and \ref{fig:qual_news_2} are from the \nlptwo\ dataset. These figures suggest that \ourmethod{} can not only discover new meaningful concepts outside the ground-truth concepts, such as the \cn{Birds in Hands} concept in Figure \ref{fig:qual_cub_2}, but also compose these concepts correctly, e.g. \cn{White Birds} + \cn{Birds in Hands} =  \cn{White Birds in Hands}.}\label{fig: concept_qualitative}
\end{figure*}

\textbf{Experiment Design.} We aim to answer these questions regarding the quality of the learned concept representations:
\begin{enumerate}[leftmargin=*,nolistsep, nosep, label=\textbf{RQ\arabic*}]
  \item In the controlled setting with known compositional ground-truth concept representations, does \ourmethod{} compose concepts more effectively than baselines? \label{rq1}
  \item In the full setting where the ground-truth concepts are typically unknown, can \ourmethod{} successfully discover new and meaningful compositional concepts? \label{rq2}
  \item In both controlled and full settings, how can the learned compositional concept representations impact downstream performance? \label{rq3}
\end{enumerate}

To address \ref{rq1}, we evaluate the compositionality score \citep{andreas2018measuring} on the concept representations extracted by \ourmethod{} and the baselines, which is defined as follows:

\begin{definition}\label{comp-score}
    (Compositionality Score) Given a dataset $D$ consisting of embeddings $z\in\sR^d$, their associated ground-truth concepts $C\subset\sC$, and a concept representation function $R: \sC \rightarrow \sR^d$ obtained from a concept extractor,
    the compositionality score is the following:

\begin{small}
\vspace{-0.3in}
    \begin{align*}
        \min_{\Lambda \ge 0} \frac{1}{|D|}\sum_{(z, C)\in D} \left\|z - \sum_{i=1}^{|C|} \Lambda_{z, i}R(C_i)\right\|
    \end{align*}
\vspace{-0.2in}
\end{small}
\end{definition}
Intuitively speaking, for a sample embedding $z$, this metric quantifies how much $z$ can be reconstructed by composing a list of concept representation $R(c_i)$'s that correspond to the $i_{th}$ ground-truth concepts of $z$. Each $R(c_i)$ is weighted by a coefficient $\Lambda_{z, i}$, which is determined by optimizing the above formula with respect to all $\Lambda_{z, i}$.

In addition, for each ground-truth concept, we also report the cosine similarity between the learned concept representation $R(c_i)$ and the corresponding ground-truth representation.

To study \ref{rq2} for the full setting, we primarily perform qualitative studies to identify whether \ourmethod{} is capable of discovering reasonable compositional concepts. Specifically, for each learned concept representation, we assign a name to the concept by inspecting the ten images with the top concept score. Then for each pair of the learned concepts, we first identify those samples with the highest concept scores. Then, we sum the two concept representations, and find the samples with largest concept score for this aggregated representation. By investigating these examples, we visually examine whether the composition is reasonable or not.

Lastly, we answer \ref{rq3} by evaluating the downstream classification performance with the learned concept representations.  
Specifically, we follow \citet{pcbm} to learn a linear classifier by predicting class labels with the concept scores of a sample. We further report the performance of training a linear classifier on sample embeddings without involving any concepts, denoted by ``No concept''.

\subsection{Experimental Results}

\begin{table} %
\vspace{-0.1in}
    \centering
    \caption{Compositionality Scores (lower is better).}
    \footnotesize
    \label{tab:performance-table}
    \begin{tabular}{lrrr}
    \toprule
        \rowcolor{gray!20} & \clevr{} &  \cubsub\   & \nlpsub{} \\
        \midrule

GT & \cellcolor{pink!100} 3.162 $\pm$ 0.000 & \cellcolor{pink!85} 0.462 $\pm$ 0.000 & \cellcolor{pink!57} 3.743 $\pm$ 0.000 \\
PCA & \cellcolor{pink!42} 3.684 $\pm$ 0.000 & \cellcolor{pink!71} 0.472 $\pm$ 0.000 & \cellcolor{pink!28} 3.975 $\pm$ 0.000 \\
ACE & \cellcolor{pink!57} 3.474 $\pm$ 0.134 & \cellcolor{pink!42} 0.496 $\pm$ 0.007 & \cellcolor{pink!71} 3.727 $\pm$ 0.032 \\
DictLearn & \cellcolor{pink!71} 3.367 $\pm$ 0.016 & \cellcolor{pink!28} 0.498 $\pm$ 0.002 & \cellcolor{pink!85} 3.708 $\pm$ 0.007 \\
SemiNMF & \cellcolor{pink!28} 3.716 $\pm$ 0.053 & \cellcolor{pink!57} 0.495 $\pm$ 0.004 & \cellcolor{pink!42} 3.781 $\pm$ 0.074 \\
CT & \cellcolor{pink!0} 4.929 $\pm$ 0.002 & \cellcolor{pink!14} 0.545 $\pm$ 0.000 & \cellcolor{pink!0} 4.348 $\pm$ 0.000 \\
Random & \cellcolor{pink!14} 4.925 $\pm$ 0.000 & \cellcolor{pink!0} 0.545 $\pm$ 0.000 & \cellcolor{pink!14} 4.348 $\pm$ 0.000 \\
\ourmethod{} & \cellcolor{pink!85} 3.163 $\pm$ 0.000 & \cellcolor{pink!100} 0.459 $\pm$ 0.004 & \cellcolor{pink!100} 3.689 $\pm$ 0.002 \\

    \bottomrule
    \end{tabular}
    \vspace{-0.1in}
\end{table}

\begin{figure*}[h!]
  \centering
      \begin{subfigure}[b]{0.24\textwidth}
    \includegraphics[width=\textwidth]{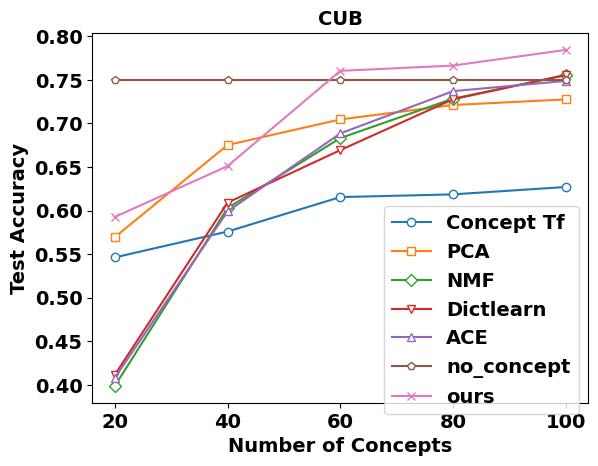}
    \label{fig:downstream_res1}
  \end{subfigure}
  \hfill
  \begin{subfigure}[b]{0.24\textwidth}
    \includegraphics[width=\textwidth]{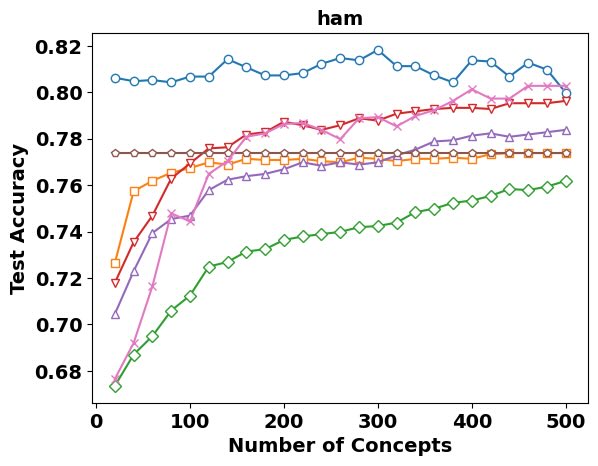}
    \label{fig:downstream_res2}
  \end{subfigure}
    \hfill
  \begin{subfigure}[b]{0.24\textwidth}
    \includegraphics[width=\textwidth]{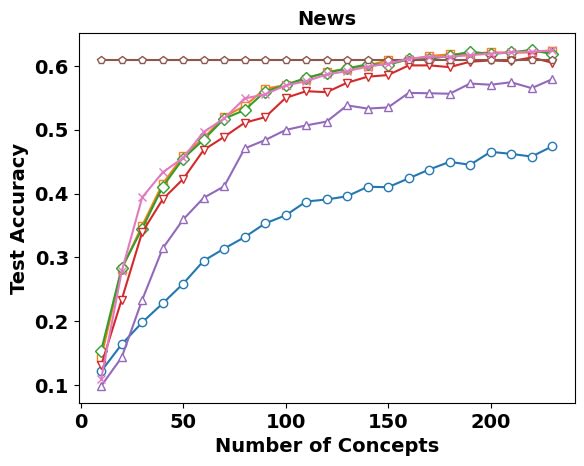}
    \label{fig:downstream_res3}
  \end{subfigure}
    \hfill
  \begin{subfigure}[b]{0.24\textwidth}
    \includegraphics[width=\textwidth]{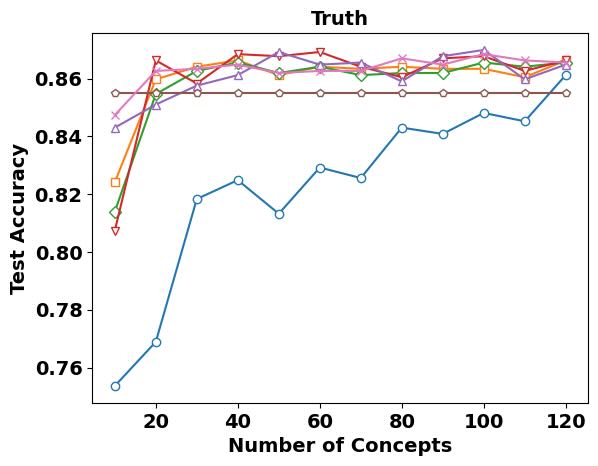}
    \label{fig:downstream_res3}
  \end{subfigure}
    \vspace{-0.25in}
  \caption{Downstream classification accuracy on the full setting.}
  \label{fig:downstream_res}

  \vspace{-0.1in}
\end{figure*}

\textbf{Compositionality in Controlled Settings.}
We first evaluate the compositionality scores on the \clevr{}, \cubsub{}, and \nlpsub{} datasets and report them in Table \ref{tab:performance-table}. In all cases, \ourmethod{} obtains the best score compared to the baselines, indicating the advantage of \ourmethod{} in discovering compositional concepts. Moreover, \ourmethod{}'s scores are comparable to those of the ground-truth concept representations.
This shows that the concepts learned by \ourmethod{} almost align with the ground-truth concept representations.

This is further supported by the results in Table \ref{tab:cosine_mean}. This table summarizes the cosine similarities between the ground-truth concept representations and the ones learned by the baselines and \ourmethod{}.
Again, the concepts learned by \ourmethod{} are the closest to the ground truths.
Note that some baselines like Dictlearn also produce highly accurate concept representations.
However, as Table~\ref{tab:performance-table} shows, their compositions fail to be consistent with the ground truths.

\textbf{Compositionality in Real Data Settings.}
To address \ref{rq2}, we perform some qualitative studies on compositional concepts discovered by \ourmethod{} on the \cub\ and \nlptwo\ dataset, which are visualized in Figure \ref{fig: concept_qualitative}. As shown in this figure, \ourmethod{} is capable of identifying reasonable concepts, such as \cn{White Birds}, \cn{Framed Birds} and  \cn{Text Ending in ``...''}. Some
 of these concepts are even beyond the ground-truth concept labels that are provided by the dataset itself. For example, \ourmethod{} identifies the ``Birds in Hands'' concept which is not labeled in the \cub\ dataset. But its top activated samples are images with a bird in someone's hand (see Figure \ref{fig:qual_cub_2}). Furthermore, the composition of those learned concepts is also representative of the properties of each concept. For example, in Figure \ref{fig:qual_news_1}, the composition of the concept \cn{Text Ending in ``...''} and \cn{Sports} represents sentences about ``sports" ending in ``...''.

\begin{table}[h!]
    \vspace{-0.1in}
    \centering
    \caption{The average cosine similarity between individual learned concept representations and the ground truth
    (higher is better).}
    \label{tab:cosine_mean}
    \footnotesize
    \begin{tabular}{lrrr}
    \toprule
        \rowcolor{gray!20} &\clevr{} & \cubsub\   & \nlpsub{} \\
        \midrule

PCA & \cellcolor{pink!42} 0.580 $\pm$ 0.000 & \cellcolor{pink!42} 0.503 $\pm$ 0.000 & \cellcolor{pink!42} 0.459 $\pm$ 0.000 \\
ACE & \cellcolor{pink!57} 0.728 $\pm$ 0.009 & \cellcolor{pink!85} 0.719 $\pm$ 0.016 & \cellcolor{pink!57} 0.648 $\pm$ 0.007 \\
DictLearn & \cellcolor{pink!85} 0.745 $\pm$ 0.003 & \cellcolor{pink!57} 0.661 $\pm$ 0.010 & \cellcolor{pink!85} 0.686 $\pm$ 0.007 \\
SemiNMF & \cellcolor{pink!71} 0.732 $\pm$ 0.014 & \cellcolor{pink!71} 0.696 $\pm$ 0.002 & \cellcolor{pink!71} 0.673 $\pm$ 0.052 \\
CT & \cellcolor{pink!14} 0.044 $\pm$ 0.009 & \cellcolor{pink!28} 0.066 $\pm$ 0.001 & \cellcolor{pink!14} 0.019 $\pm$ 0.002 \\
Random & \cellcolor{pink!28} 0.059 $\pm$ 0.003 & \cellcolor{pink!14} 0.043 $\pm$ 0.011 & \cellcolor{pink!28} 0.024 $\pm$ 0.001 \\
\ourmethod{} & \cellcolor{pink!100} 0.992 $\pm$ 0.000 & \cellcolor{pink!100} 0.770 $\pm$ 0.001 & \cellcolor{pink!100} 0.804 $\pm$ 0.001 \\

        \bottomrule
    \end{tabular}
    \vspace{-0.15in}
\end{table}

\textbf{Downstream Performance Analysis.} For \ref{rq3}, we studied the impact of the extracted compositional concepts on downstream performance across all datasets in the full setting. Throughout the experiments, we observe that the total number of concepts is a crucial factor in determining the performance. Therefore, we also vary this number and report the performance numbers accordingly for all datasets and methods in Figure \ref{fig:downstream_res}. As this figure suggests, across all the datasets, despite the poor performance with a small number of concepts, \ourmethod{} gradually gains performance with an increasing number of concepts, eventually outperforming all the unsupervised baseline methods. 

Also, it is worth noting that \ourmethod{} outperforms Concept Tf most times and is on par with it in the worst case (see the experimental results on the ham dataset with 500 concepts). This thus indicates the performance advantage of \ourmethod{} even in the absence of supervision from downstream tasks.

Furthermore, \ourmethod{} discovers concept representations by performing a series of linear transformations on top of the sample embeddings. But by comparing against ``No concept'' where sample embeddings are directly used for downstream tasks, \ourmethod{} can even outperform it by a large margin on \cub\ and \ham\ dataset. This implies that the concept representations extracted by \ourmethod{} might be more relevant to the downstream classification tasks than the raw embeddings.

\section{Related Work}
\label{sec:relwork}

\textbf{Concept-based Interpretability.}
Concept-based interpretability encompasses the building of models using human-interpretable concepts \citep{koh2020concept, espinosa2022concept, pcbm} and extracting such concepts post-hoc from models \citep{tcav, zhou2018interpretable}. In either case, how do we choose which concepts to use?
Some existing work specifies concepts using human supervision to select and provide their labels \citep{tcav}, large-scale concept annotation datasets \citep{bau2017network}, general knowledge bases \citep{pcbm}, and large language models \citep{yang2023language}. Another line of work uses regularization \citep{wong2021leveraging}, or other inductive biases \citep{rigotti2021attention} to learn concepts during standard supervised training of a model. Finally, there is work which leverages unsupervised methods to automatically discover concepts \citep{ace, craft, yun2021transformer, bricken2023monosemanticity} which is the approach taken in this paper. Unlike existing unsupervised concept learning methods which focus on properties such as faithfulness \citep{ace} or human-meaningfulness \citep{craft}, we focus specifically on compositionality.

\textbf{Compositionality in Foundation Models.}
Since the observation of compositional word vectors by \citet{mikolov2013distributed} there has been interest in finding and utilizing compositional behavior of deep learning models. Existing work has leveraged insights from psychology and cognitive science to find concepts learned by generative models \citep{frankland2020concepts, lake2014towards}. Compositionality has been used to uncover and mitigate bias in word embeddings \citep{bolukbasi2016man}, edit classifier behavior \citep{santurkar2021editing}, and recently to monitor and control the behavior of foundational language \citep{todd2023function, repe} and vision models \citep{wang2023concept, kwon2022diffusion}. To the best of our knowledge, we are the first to evaluate compositionality of concept representations learned by unsupervised approaches and to propose a method to improve compositionality of discovered concepts.

\textbf{Compositional and Disentangled Representations.}
In representation learning, there is considerable effort to encourage \textit{disentangled} representations \citep{bengio2013representation, higgins2016beta, wang2022disentangled}. While disentanglement concerns how to distinguish separate concepts in embedding space, compositionality concerns what happens when separate concepts get combined. Existing work has shown that disentanglement and compositionality do not have to be correlated \citep{xu2022compositional}. Unlike representation learning, we start with a pretrained model and try to uncover the compositional concepts it learned.

\textbf{Structures beyond compositionality.}
This paper focuses on compositionality in concept-based interpretability, but other important structures include subpopulation, relational, and causal structures. Group, or subpopulation, structure has been used as a way to interpret datasets with existing work on automatically finding such structure \citep{blei2001latent} and explaining models with respect to this structure \citep{HavaldarSWU23}. In addition, existing work has developed methods to steer explanations to respect group structures \citep{stein2023rectifying}. Relational structures have also been studied as a lens into understanding the behavior of pretrained models \citep{todd2024function, lovering-pavlick-2022-unit, hill2019learning}. Beyond group and relational structures, recent work proposes a method to identify known causal structures in pretrained LLMs \citep{wu2024interpretability}.

\section{Limitations}
\label{sec:limitations}

We study the case where concepts compose compositionally, but concepts may also be non-compositional. For instance, the concepts of \cn{hot} and \cn{dog} do not compose to form the meaning of \cn{hot dog} \citep{zhai1997exploiting}. In addition, we supposed a flat concept structure, which does not distinguish between ``(small blue) car" and ``small (blue car)". We leave the study of such non-compositional and hierarchical concepts to future work.

Another limitation of unsupervised concept extraction is that discovered concept vectors are not associated with any name. We assign names to the concept through manual inspection of samples with a high concept score, but this can require significant effort with large numbers of concepts.

\section{Conclusion}
\label{sec:conclusion}
In this paper, we studied concept-based explanations of foundation models from the lens of compositionality. We validated that the ground-truth concepts extracted from these models are compositional while the existing unsupervised concept extraction methods usually fail to guarantee compositionality. To address this issue, we first identified two salient properties for compositional concept representations and designed a novel concept extraction method called \ourmethod\ that respects these properties by design. Through extensive experiments across vision and language datasets, we demonstrated that \ourmethod\ not only learns compositional concepts but also enhances downstream performance.

\section*{Acknowledgements}
This material is based upon work supported by the National Science Foundation Graduate Research Fellowship under Grand No. DGE-2236662, the Google Research Fellowship, and ``The Fundamental Research Funds for the Central Universities, Peking University''.

\section*{Impact Statement}
This paper presents work whose goal is to advance the field of Machine Learning. There are many potential societal consequences of our work, none which we feel must be specifically highlighted here.

\bibliography{refs}
\bibliographystyle{icml2024}

\newpage
\appendix
\onecolumn
\section{Proof of Lemma~\ref{lemma:comp-score}}
\label{lemma_proof}
\begin{proof}
Let $z\in\sR^d$ be a sample embedding, $R:\sC\rightarrow \sR^d$ be a compositional concept representation function, and $c_i, c_j\in\sC$ be two compositional concepts which compose as $c_k=c_i\cup c_j$. From Definition~\ref{comp-score}, the concept scores for $c_i$ and $c_j$ are the following:
\begin{align*}
    s(z, c_i) &= S_{\cos{}}(z, R(c_i))\\
    s(z, c_j) &= S_{\cos{}}(z, R(c_j)).
\end{align*}

The concept score for the composition $c_k$ can then be written as:
\begin{align*}
    s(z, c_k) &= s(z, c_i\cup c_j)\\
    &= S_{\cos{}}(z, R(c_i\cup c_j))\\
    &= S_{\cos{}}(z, w_{c_i}R(c_i) + w_{c_j}R(c_j)) & \text{(since $R$ is compositional)}\\
    &= \frac{z\cdot (w_{c_i}R(c_i) + w_{c_j}R(c_j))}{\|z\|\|w_{c_i}R(c_i) + w_{c_j}R(c_j)\|} & \text{(definition of cosine similarity)}\\
    &= \frac{z\cdot w_{c_i}R(c_i)}{\|z\|\|R(c_k)\|} + \frac{z\cdot w_{c_j}R(c_j)}{\|z\|\|R(c_k)\|}\\
    &= \frac{(w_{c_i}\|R(c_i)\|) z\cdot R(c_i)}{\|R(c_k)\|\|z\|\|R(c_i)\|} + \frac{(w_{c_j}\|R(c_j)\|) z\cdot R(c_j)}{\|R(c_k)\|\|z\|\|R(c_j)\|}\\
    &= \frac{w_{c_i}\|R(c_i)\|}{\|R(c_k)\|}S_{\cos{}}(z, R(c_i)) + \frac{w_{c_j}\|R(c_j)\|}{\|R(c_k)\|}S_{\cos{}}(z, R(c_j)) & \text{(definition of cosine similarity)}
\end{align*}
\end{proof}

\section{Proof of Theorem~\ref{theorem}}

\begin{lemma}[curse of dimensionality]\citep{wegner2021lecture}
    For a pair of vectors $\mathbf{x}$ and $\mathbf{y}$ randomly sampled from $N(0, \mathbf{I}^d)$, $\mathbf{x}$ and $\mathbf{y}$ are orthogonal with high probability for large enough $d$. Mathematically speaking, for a fixed small constant, $\epsilon$, the following inequality holds:
    \begin{align*}
        \mathbb{P}\left[|\langle \frac{\mathbf{x}}{|\mathbf{x}|}, \frac{\mathbf{y}}{|\mathbf{y}|}\rangle| \leq \epsilon \right] \geq 1 - \frac{M_1}{\sqrt{d}\epsilon} - \frac{M_2}{\sqrt{d}},
    \end{align*}
    where $M_1= 2$ and $M_2 =7$
\end{lemma}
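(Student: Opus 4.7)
The plan is to reduce to a one-dimensional problem via rotational invariance of the standard Gaussian, then control the resulting scalar ratio with two classical concentration bounds stitched together by a union bound. Specifically, since $N(\mathbf{0},\mathbf{I}^d)$ is invariant under orthogonal transformations, I would condition on $\mathbf{y}$ and rotate coordinates so that $\mathbf{y}/|\mathbf{y}|=\mathbf{e}_1$. Then
$$\left|\left\langle \tfrac{\mathbf{x}}{|\mathbf{x}|},\tfrac{\mathbf{y}}{|\mathbf{y}|}\right\rangle\right| = \frac{|x_1|}{|\mathbf{x}|},$$
with $x_1\sim N(0,1)$ and $|\mathbf{x}|^2\sim\chi^2_d$. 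Since this reduction removes all dependence on $\mathbf{y}$, it suffices to bound $\mathbb{P}[|x_1|/|\mathbf{x}|>\epsilon]$.

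Next I would split on whether $|\mathbf{x}|$ is anomalously small relative to its typical size $\sqrt{d}$. Choosing the threshold $\sqrt{d}/2$, I note that if both $|x_1|\leq \epsilon\sqrt{d}/2$ and $|\mathbf{x}|\geq \sqrt{d}/2$ hold, then the ratio is at most $\epsilon$. A union bound therefore gives
$$\mathbb{P}\!\left[\tfrac{|x_1|}{|\mathbf{x}|}>\epsilon\right] \leq \mathbb{P}\!\left[|x_1|>\tfrac{\epsilon\sqrt{d}}{2}\right] + \mathbb{P}\!\left[|\mathbf{x}|<\tfrac{\sqrt{d}}{2}\right].$$

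For the first term I would apply Markov's inequality using $\mathbb{E}|x_1|=\sqrt{2/\pi}$, producing a bound of $2\sqrt{2/\pi}/(\epsilon\sqrt{d})\approx 1.596/(\epsilon\sqrt{d})$, which sits under $2/(\epsilon\sqrt{d})$ and yields $M_1=2$. For the second term I would use Chebyshev on $|\mathbf{x}|^2$ with mean $d$ and variance $2d$, obtaining
$$\mathbb{P}[|\mathbf{x}|<\sqrt{d}/2]=\mathbb{P}[|\mathbf{x}|^2<d/4]\leq \frac{2d}{(3d/4)^2}=\frac{32}{9d},$$
and then verify that $32/(9d)\leq 7/\sqrt{d}$ for every $d\geq 1$ (equivalent to $\sqrt{d}\geq 32/63$), absorbing this into $M_2=7$.

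The main obstacle is making the stated constants $M_1=2$ and $M_2=7$ land simultaneously: different choices of the split threshold shift mass between the two terms, and a careless split inflates one constant while shrinking the other. A secondary subtlety is that the Chebyshev bound on the chi-squared lower tail is actually $O(1/d)$ rather than the weaker $O(1/\sqrt{d})$ claimed, so one must check that the coarser rate still dominates for all $d\geq 1$ rather than only asymptotically; once that bookkeeping is done, the overall bound follows immediately from the two-line union-bound estimate above.
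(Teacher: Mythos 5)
Your proof is correct, and it is worth noting that the paper itself offers no proof of this lemma at all --- it is imported verbatim from the cited lecture notes --- so your argument is the only self-contained derivation on the table. The reduction via rotational invariance (conditioning on $\mathbf{y}$ and rotating so that $\mathbf{y}/|\mathbf{y}| = \mathbf{e}_1$, which tacitly but reasonably uses independence of the two samples) cleanly collapses the problem to bounding $|x_1|/|\mathbf{x}|$, and the two-term union bound checks out numerically: Markov on $|x_1|$ gives $2\sqrt{2/\pi}/(\epsilon\sqrt{d}) \approx 1.596/(\epsilon\sqrt{d}) \le M_1/(\sqrt{d}\,\epsilon)$ with $M_1 = 2$, and Chebyshev on $|\mathbf{x}|^2 \sim \chi^2_d$ gives $32/(9d)$, which is indeed at most $7/\sqrt{d}$ for all $d \ge 1$ since $\sqrt{d} \ge 32/63$. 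The standard textbook route instead controls $|\langle \mathbf{x},\mathbf{y}\rangle|$ directly and invokes the Gaussian annulus theorem for \emph{both} norms; your version needs concentration of only one norm and only a second-moment (Chebyshev) bound rather than the exponential annulus bound, which is why your lower-tail term decays like $1/d$ and is simply absorbed into the coarser $M_2/\sqrt{d}$ slot. The only cosmetic caveat is that the events $\mathbf{x}=\mathbf{0}$ or $\mathbf{y}=\mathbf{0}$ have probability zero, so the normalizations are almost surely well defined; otherwise the argument is complete.
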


\begin{lemma}[Gaussian Annulus Theorem]\citep{wegner2021lecture}\label{lemma:gaussian_bound}
    For a vector $v$ randomly sampled from $N(0, \mathbf{I}^d)$, $\|v\|$ is approaching $\sqrt{d}$ with high probability for large enough $d$. Mathematically speaking, the following inequality holds:
    \begin{align*}
        \mathbb{P}\left[|\|\mathbf{x}\| - \sqrt{d}| \leq \epsilon \right] \geq 2 \exp{(-M_3\epsilon^2)},
    \end{align*}
    in which $M_3 = \frac{1}{16}$
\end{lemma}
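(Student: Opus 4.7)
\textbf{Proof proposal for the Gaussian Annulus Theorem (Lemma~\ref{lemma:gaussian_bound}).}

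The plan is to reduce norm concentration to a chi-square concentration and then apply a Chernoff-style argument. I note first that the statement as written (``$\mathbb{P}[|\|\mathbf{x}\|-\sqrt{d}|\le\epsilon]\ge 2\exp(-\epsilon^2/16)$'') appears to contain a typo, since the right side exceeds $1$ at $\epsilon=0$ and tends to $0$ as $\epsilon\to\infty$. The intended statement, matching the constant $M_3=1/16$ and the cited reference, is
\begin{equation*}
\mathbb{P}\bigl[\,|\|\mathbf{x}\|-\sqrt{d}|\ge\epsilon\,\bigr]\le 2\exp(-\epsilon^2/16),
\end{equation*}
and I will prove this form; the statement's form follows by taking complements.

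\textbf{Step 1: Reduction to a chi-square event.} Write $\|\mathbf{x}\|^2=\sum_{i=1}^d x_i^2$ with $x_i\stackrel{\text{iid}}{\sim} N(0,1)$, so $\mathbb{E}\|\mathbf{x}\|^2=d$. For $\epsilon\ge 0$, the event $\{\|\mathbf{x}\|\ge\sqrt{d}+\epsilon\}$ is equivalent to $\{\|\mathbf{x}\|^2-d\ge 2\epsilon\sqrt{d}+\epsilon^2\}$, and (for $\epsilon<\sqrt{d}$) $\{\|\mathbf{x}\|\le\sqrt{d}-\epsilon\}$ is equivalent to $\{d-\|\mathbf{x}\|^2\ge 2\epsilon\sqrt{d}-\epsilon^2\}$. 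In either direction the deviation of $\|\mathbf{x}\|^2$ from $d$ is at least $2\epsilon\sqrt{d}-\epsilon^2$ in absolute value, which for the parameter range that matters is essentially of order $\epsilon\sqrt{d}$.

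\textbf{Step 2: MGF of the chi-square increment.} For $|\lambda|<1/2$, a direct Gaussian integral gives $\mathbb{E}[\exp(\lambda x_i^2)]=(1-2\lambda)^{-1/2}$, hence
\begin{equation*}
\mathbb{E}\bigl[\exp(\lambda(\|\mathbf{x}\|^2-d))\bigr]=(1-2\lambda)^{-d/2}e^{-\lambda d}.
\end{equation*}
Using the Taylor bound $-\tfrac{1}{2}\log(1-2\lambda)-\lambda\le 2\lambda^2$ valid for $|\lambda|\le 1/4$, one obtains the sub-exponential estimate $\mathbb{E}[\exp(\lambda(\|\mathbf{x}\|^2-d))]\le\exp(2d\lambda^2)$ on that range.

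\textbf{Step 3: Chernoff bound and optimization.} A standard Markov/Chernoff argument with the bound above yields, for $t\ge 0$,
\begin{equation*}
\mathbb{P}\bigl[\|\mathbf{x}\|^2-d\ge t\bigr]\le\exp\!\bigl(-\min(t^2/(8d),\,t/8)\bigr),
\end{equation*}
and symmetrically for the lower tail (the lower-tail MGF is even easier since $\lambda<0$ keeps $1-2\lambda$ away from $0$). Substituting $t=2\epsilon\sqrt{d}$ (the leading-order translation from Step~1) into the quadratic branch gives the desired $\exp(-\epsilon^2/16)$ bound for each tail, and a union bound yields the factor of $2$.

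\textbf{Step 4: Handling the cross terms and regime switch.} Two nuisances must be cleaned up. First, the translation in Step~1 contributes an $\epsilon^2$ correction in addition to $2\epsilon\sqrt{d}$; this only strengthens the upper tail (since $+\epsilon^2$ pushes $\|\mathbf{x}\|^2$ further from $d$) and weakens the lower tail, but only when $\epsilon\ge\sqrt{d}$, a regime in which $\|\mathbf{x}\|\ge\sqrt{d}-\epsilon$ is trivially true. Second, when $\epsilon>\sqrt{d}$ the substitution $t=2\epsilon\sqrt{d}$ eventually lands in the linear branch of the sub-exponential bound; one checks that there $-t/8=-\epsilon\sqrt{d}/4\le-\epsilon^2/16$, so the $\exp(-\epsilon^2/16)$ form still holds.

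\textbf{Main obstacle.} The step-wise machinery is standard, but pinning down the exact constant $1/16$ requires the specific Taylor bound $-\tfrac{1}{2}\log(1-2\lambda)-\lambda\le 2\lambda^2$ on $|\lambda|\le 1/4$; a looser bound gives a worse constant and then the substitution $t=2\epsilon\sqrt{d}$ no longer yields $\epsilon^2/16$. Beyond that, the only care needed is the transition between the quadratic and linear regimes of the sub-exponential tail, as discussed in Step~4.
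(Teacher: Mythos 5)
The paper does not prove this lemma at all: it is imported verbatim from the cited lecture notes \citep{wegner2021lecture} and used as a black box, so there is nothing in the paper to compare your argument against step by step. Your proposal supplies the standard self-contained derivation (chi-square MGF, sub-exponential bound $\mathbb{E}[\exp(\lambda(\|\mathbf{x}\|^2-d))]\le\exp(2d\lambda^2)$ for $|\lambda|\le 1/4$, Chernoff, then translate from $\|\mathbf{x}\|^2$ back to $\|\mathbf{x}\|$), and the skeleton is sound. You are also right that the statement as printed is a typo --- the inequality must read $\mathbb{P}[|\|\mathbf{x}\|-\sqrt{d}|\ge\epsilon]\le 2\exp(-\epsilon^2/16)$ (or its complement with $1-2\exp(\cdot)$), since the printed right-hand side exceeds $1$ at $\epsilon=0$; this is a genuinely useful correction because the lemma is invoked downstream in the proofs of Theorem 3.1 and Corollary B.6.

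One quantitative slip to fix. In Step 4 you assert that in the linear branch $-t/8=-\epsilon\sqrt{d}/4\le-\epsilon^2/16$; this holds only for $\epsilon\le 4\sqrt{d}$, and fails for larger $\epsilon$. The repair is already implicit in your own Step 1: for the upper tail the true deviation is $t=2\epsilon\sqrt{d}+\epsilon^2\ge\epsilon^2$, so the linear branch gives $e^{-t/8}\le e^{-\epsilon^2/8}$ for every $\epsilon$, while for the lower tail the event is vacuous once $\epsilon\ge\sqrt{d}$ and otherwise $t=2\epsilon\sqrt{d}-\epsilon^2=(2\sqrt{d}-\epsilon)\epsilon\ge\epsilon\sqrt{d}$ stays in the quadratic branch (since $t\le d$ is equivalent to $(\sqrt{d}-\epsilon)^2\ge 0$), giving $e^{-\epsilon^2/8}$. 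So you should carry the $\epsilon^2$ correction through rather than discarding it and then reintroducing it informally; with that done, both tails are bounded by $e^{-\epsilon^2/8}\le e^{-\epsilon^2/16}$ and the union bound finishes the claim with room to spare on the constant.
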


Based on the above two lemmas, for any two randomly sampled vectors $\mathbf{x}$ and $\mathbf{y}$ from $N(0, \mathbf{I}^d)$, the following equality holds with high probability:
\begin{align}\label{eq: inner_prod_bound}
    \langle \mathbf{x}, \mathbf{y} \rangle = o(d)
\end{align}

\update{
\begin{lemma}\label{lemma:base_concept_rep}
    As defined in Theorem~\ref{theorem}, for a composite concept $c=\{c_i,c'_j\}$, its representation is denoted by $v_{i,j}$, then the representation of the base concept $c_i$ belonging to attribute $A$ is:
    \begin{align*}
        v_i = \frac{1}{l'} \sum_{j=1}^{l'} v_{i,j}.
    \end{align*}
    Similarly, the representation of the base concept $c'_j\in A'$ is:
    \begin{align*}
        v'_j = \frac{1}{l} \sum_{i=1}^{l} v_{i,j}.
    \end{align*}
\end{lemma}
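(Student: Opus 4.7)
\textbf{Proof proposal for Lemma~\ref{lemma:base_concept_rep}.}
The plan is to unfold the operational definition of ``ground-truth concept representation'' used throughout Section~\ref{sec:gt-comp} and then reduce the claim to a simple averaging identity. Recall that, following the surrogate construction of \citet{repe} adopted in Section~\ref{sec:gt-comp}, the representation of any concept $c$ is taken to be the mean of sample embeddings whose label set contains $c$. Applying this to the composite concept $\{c_i,c'_j\}$ gives $v_{i,j} = \mathbb{E}[z \mid c_i,c'_j \text{ both present}]$, and applying it to the base concept $c_i \in A$ gives $v_i = \mathbb{E}[z \mid c_i \text{ present}]$. The task is therefore to show that the latter conditional expectation equals the uniform average of the former over $j = 1,\dots,l'$.

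The first step I would take is to partition the event ``$c_i$ is present'' into the $l'$ disjoint events ``$\{c_i,c'_j\}$ is present'' for $j = 1,\dots,l'$, using the fact that $A'$ is an attribute and so, by the attribute structure from Definition~\ref{def:decomp}, every sample labelled with $c_i$ carries exactly one concept from $A'$. Then by the tower/law-of-total-expectation rule,
\begin{equation*}
v_i \;=\; \mathbb{E}[z \mid c_i] \;=\; \sum_{j=1}^{l'} \mathbb{P}(c'_j \mid c_i)\,\mathbb{E}[z \mid c_i,c'_j] \;=\; \sum_{j=1}^{l'} \mathbb{P}(c'_j \mid c_i)\, v_{i,j}.
\end{equation*}
The second step is to invoke the balanced (uniform) data assumption that underlies the controlled setting of Section~\ref{sec: motivation_setup}, under which $\mathbb{P}(c'_j \mid c_i) = 1/l'$ for every $j$. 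Substituting this yields $v_i = \frac{1}{l'}\sum_{j=1}^{l'} v_{i,j}$, and the argument for $v'_j = \frac{1}{l}\sum_{i=1}^{l} v_{i,j}$ is entirely symmetric, swapping the roles of $A$ and $A'$.

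The main obstacle, as I see it, is not any probabilistic manipulation but rather pinning down the modelling assumption that lets us read off the uniform weights $1/l'$ and $1/l$: the claim as stated requires that the conditional distribution of $A'$-concepts given $c_i$ (and vice versa) be uniform. This is natural in the controlled datasets used in the paper (\clevr, \cubsub, \nlpsub), but it should be flagged explicitly in the proof so that the lemma is not mis-applied in settings with imbalanced attribute combinations; in the imbalanced case the statement would have to be replaced by a weighted average with weights $\mathbb{P}(c'_j \mid c_i)$. Once this assumption is made explicit, the proof reduces to one line each for $v_i$ and $v'_j$.
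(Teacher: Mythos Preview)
Your proposal is correct and matches the paper's own proof essentially line for line: both start from the operational definition of a concept representation as the mean over samples carrying that concept, partition the samples by the concept they carry in the other attribute, and then invoke a balance assumption (the paper phrases it as ``a large enough number of samples for each composite concept'' so that each cell has roughly $N/l'$ samples) to collapse the weights to $1/l'$. Your expectation/tower-rule phrasing is just the probabilistic restatement of the paper's finite-sample averaging, and your explicit flagging of the uniformity assumption is, if anything, cleaner than the paper's treatment.
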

}

\begin{proof}
    \update{$v_i$ could be derived by calculating the mean of the representations of all samples with concept $c_i$ in the attribute $A$. Since those samples may have different concepts in the attribute $A'$, then the composite concept among these samples could be $\{c_i, c_1'\},\{c_i, c_2'\},\dots, \{c_i, c_l'\}$. Therefore, $v_i$ is derived by: 
    \begin{align*}
        v_i =  \frac{1}{N}\sum_{x\ \text{with concept $c_i$ in attribute A}}x = \frac{1}{N}\sum_{j=1}^{l'}\sum_{x\ \text{with concept $\{c_i,c_j'\}$}}x,
    \end{align*}    
    in which $N$ represents the number of samples with concept $c_i$ in attribute $A$. By further assuming that there is a large enough number of samples for each composite concept, this implies that the number of each composite concept is roughly the same, i.e., around $N/l'$. Then the above formula could be transformed to:
    \begin{align*}
        v_i = \frac{1}{N}\sum_{j=1}^{l'}\sum_{x\ \text{with concept $\{c_i,c_j'\}$}}x = \frac{1}{N}\sum_{j=1}^{l'}\frac{N}{l'} v_{i,j} = \frac{1}{l'}\sum_{j=1}^{l'}v_{i,j}.
    \end{align*}
    The last step in the above formula leverages the fact that $v_{i,j}$ is calculated by the mean of all samples belonging to composite concept $\{c_i,c_j'\}$.
    }

    \update{
    We can further illustrate this with one concrete example from the CLEVR dataset. By reusing the running example from Section \ref{sec: concept_compose}, we assume that there are three colors \{red, green, blue\} and three shapes \{sphere, cube, cylinder\} in the CLEVR dataset. By following the notations of Theorem~\ref{theorem}, the representation of a composite concept, say, $\{c_{\text{red}},c_{\text{sphere}}\}$, is represented by $v_{\text{red}, \text{sphere}}$. Then the representation of the base concept $\text{sphere}$ should be the mean of all samples belonging to this base concept. This can be derived by the mean of the samples belonging to the concept $\{c_{\text{red}},c_{\text{sphere}}\}$, the ones belonging to $\{c_{\text{green}},c_{\text{sphere}}\}$ and the ones belonging to $\{c_{\text{blue}},c_{\text{sphere}}\}$. Therefore, the representation of $c_{\text{sphere}}$ is denoted by:
    \begin{align*}
        v_{\text{sphere}} = \frac{1}{3}[v_{\text{red}, \text{sphere}} + v_{\text{green}, \text{sphere}} + v_{\text{blue}, \text{sphere}}]. 
    \end{align*}
    }

\end{proof}

We next present the formal proof of Theorem~\ref{theorem}:

\label{app:thm}
\begin{proof}
\update{We split our proof into two parts. The first part is for proving ``For the base concepts belonging to the same attribute, there exists at least one pair of non-orthogonal concepts.'' while the second part is for proving ``For any pair of base concepts from two different attributes, they are orthogonal with high probability.''}

\update{
\textbf{Part 1: There exists $c_1, c_2 \in A$ and $c'_1, c'_2 \in A'$ such that the representations of these base concepts are non orthogonal.}
}

\update{According to Lemma \ref{lemma:base_concept_rep}, the concept representation for the base concept $c_i$ (denoted by $\hat{v_i}$) is:
    \begin{align}\label{eq: base_concept_emb}
        \hat{v_i} = \frac{1}{l'} \sum_{j=1}^{l'} v_{i,j},
    \end{align}
which sums over all concepts in $A'$. 
}

\remove{First, we can derive the concept representation for each base concept $c_{i,t}$ (denoted by $\mu_{i,t}$) as follows:
    \begin{align}\label{eq: base_concept_emb}
        \mu_{i,t} =\frac{1}{l^{k-1}} \sum_{j_1}\sum_{j_2}\dots \sum_{j_{i-1}}\sum_{j_{i+1}}\dots \sum_{j_k} v_{j_1,j_2,j_3,\dots,j_{i-1},t,j_{i+1},\dots,j_k}.
    \end{align}   
}

Since we also want to perform centering operations over the entire dataset, then this suggests that we need to leverage the mean of all concepts, i.e.,:
\remove{\begin{align}\label{eq: all_mean}
        \mu =\frac{1}{l^{k}} \sum_{j_1}\sum_{j_2}\dots \sum_{j_k} v_{j_1,j_2,j_3,\dots,j_{i-1},j_i,j_{i+1},\dots,j_k}.
    \end{align}}
\update{
\begin{align}\label{eq: all_mean}
        \mu =\frac{1}{ll'} \sum_{i, j} v_{i, j}.
    \end{align}
}

Then after the centering operation, \remove{$\mu_{i,t}$}\update{$\hat{v_i}$} is transformed into:
\begin{align}\label{eq: centered_transform}
    v_i = \frac{\hat{v_i} - \mu}{\sigma}.
\end{align}
In the formula above, we use $\sigma$ to represent the standard deviation vector calculated over the entire dataset. 

\remove{
Then let us fix $i$ and sum up all $\mu_{i,t}'$ over all $t$, which yields:
\begin{align}\label{eq: centering_vec}
\begin{split}
    & \sum_{t=1}^l \mu_{i,t}'= \sum_{t=1}^l \frac{\mu_{i,t} - \mu}{\sigma} \\
    & = \sum_{t=1}^l \frac{\mu_{i,t}}{\sigma} - \frac{l \cdot \mu}{\sigma} 
\end{split}
\end{align}
}

\update{
Then let us fix $i$ and sum up all $v_i$ over all $i$, which yields:
\begin{align}\label{eq: centering_vec}
\begin{split}
    & \sum_{i=1}^{l} v_i = \sum_{i=1}^l \frac{\hat{v_i} - \mu}{\sigma} = \sum_{i=1}^l \frac{\hat{v_i}}{\sigma} - \frac{l \mu}{\sigma} 
\end{split}
\end{align}
}

\remove{
Then by integrating Equation \eqref{eq: base_concept_emb} and Equation \eqref{eq: all_mean} into the above formula, we can get:
\begin{align*}
    & \sum_{t=1}^l \mu_{i,t}' = \frac{1}{\sigma'}\left[\sum_{t=1}^l \frac{1}{l^{k-1}} \sum_{j_1}\sum_{j_2}\dots \sum_{j_{i-1}}\sum_{j_{i+1}}\dots \sum_{j_k} v_{j_1,j_2,j_3,\dots,j_{i-1},t,j_{i+1},\dots,j_k}\right.\\
    &\left. - \frac{1}{l^{k-1}} \sum_{j_1}\sum_{j_2}\dots \sum_{j_{i-1}}\sum_{j_{i+1}}\dots \sum_{j_k} v_{j_1,j_2,j_3,\dots,j_{i-1},t,j_{i+1},\dots,j_k}\right]\\
    & = \frac{1}{\sigma'}\left[\frac{1}{l^{k-1}} \sum_{j_1}\sum_{j_2}\dots \sum_{j_{i-1}}\sum_{t=1}^l\sum_{j_{i+1}}\dots \sum_{j_k} v_{j_1,j_2,j_3,\dots,j_{i-1},t,j_{i+1},\dots,j_k}\right.\\
    &\left. - \frac{1}{l^{k-1}} \sum_{j_1}\sum_{j_2}\dots \sum_{j_{i-1}}\sum_{j_{i+1}}\dots \sum_{j_k} v_{j_1,j_2,j_3,\dots,j_{i-1},t,j_{i+1},\dots,j_k}\right] = 0.
\end{align*}
}

\update{
Then by integrating Equation~\ref{eq: base_concept_emb} and Equation~\ref{eq: all_mean} into the above formula, we get:
\begin{align*}
    \sum_{i=1}^l v_i &= \sum_{i=1}^l \frac{\hat{v_i}}{\sigma} - \frac{l \mu}{\sigma}\\
    &= \frac{1}{\sigma} \sum_{i=1}^l \frac{1}{l'} \sum_{j=1}^{l'} v_{i,j} - \frac{l\mu}{\sigma}\\
    &= \frac{1}{\sigma l'}\sum_{i=1}^l\sum_{j=1}^{l'} v_{i,j} - \frac{1}{\sigma l'} \sum_{i=1}^l\sum_{j=1}^{l'} v_{i,j} \\
    &= 0\\
\end{align*}
We can equivalently show that $\sum_{j=1}^{l'} v'_j = 0$.
}

\update{
Therefore, the concept representations $v_i$ within the attribute $A$ are linearly dependent and the representations $v'_i$ within the attribute $A'$ are linearly dependent, meaning there exist concepts $c_i$ and $c_j$ such that $\langle v_i, v_j \rangle \neq 0$, and concepts $c'_k$ and $c'_m$ such that $\langle v_k', v_m' \rangle \neq 0$.
}

\remove{
This thus suggests that for $j_i, j_i \in \{1,2,\dots,l\}$, all $\mu_{j_i,i}'$ are correlated. If all pairs of $(\mu_{j_{i1},i}', \mu_{j_{i2},i}'), j_{i1},j_{i2} = 1,2,\dots, l$ are orthogonal, then we can obtain the following formula:
\begin{align*}
    \langle\mu_{i,t_1}', \sum_{t=1}^l \mu_{i,t}'\rangle = \langle\mu_{i,t_1}', \mu_{i,t_1}'\rangle = 0,
\end{align*}
thus indicating that all $\mu_{i,t}$ is 0. This is thus contradictory to our assumption that each $\mu_{i,t}$ is a non-zero vector. Therefore, there should exist at least one pair of $(\mu_{i,t_1}', \mu_{i,t_2}')$ which are not orthogonal. 
}

\update{
\textbf{Part 2: For all $c_1\in A$ and $c_2\in A'$, the representations of $c_1$ and $c_2$ are orthogonal with high probability.}
}

\update{
To prove that all concept representations from $A$ are orthogonal to all concept representations from $A'$ , we will show that the dot product between these two representations is zero.
Let $c_i\in A$ and $c'_j\in A'$ and $v_i, v'_j$ are the concept representations for $c_i$ and $c'_j$ respectively. We can expand the dot product as follows:
\begin{align*}
    \langle v_i, v'_j \rangle = \left\langle\frac{\hat{v_i}}{\sigma} - \frac{\mu}{\sigma},\frac{\hat{v}'_j}{\sigma} - \frac{\mu}{\sigma} \right\rangle
\end{align*}}

\update{
Then by integrating Equation~\ref{eq: base_concept_emb} and Equation~\ref{eq: all_mean} into the above formula, we can expand the above into the following:
\begin{align*}
    \langle v_i, v'_j \rangle = \frac{1}{\sigma^2} \left\langle \frac{1}{l'} \sum_{j=1}^{l'}v_{i,j} - \mu, \frac{1}{l}\sum_{i=1}^l v_{i,j} - \mu \right\rangle
\end{align*}
}

\remove{We next prove that arbitrary pairs of concept representations from two different attributes are orthogonal with high probability. To demonstrate this, we calculate the dot product between $\mu_{i_1,t_1}'$ and $\mu_{i_2,t_2}'$ which represents two concepts from attribute $i_1$ and $i_2$ respectively:
\begin{align*}
    & \langle\mu_{i_1,t_1}', \mu_{i_2,t_2}'\rangle = \langle\frac{\mu_{i,t}}{\sigma'} - \frac{\mu}{\sigma'},\frac{\mu_{i,t}}{\sigma'} - \frac{\mu}{\sigma'} \rangle \\
    & =\frac{1}{\sigma'^2} \frac{1}{l^{k}} \langle l \sum_{j_1}\sum_{j_2}\dots \sum_{j_{i_1-1}}\sum_{j_{i_1+1}}\dots \sum_{j_k} v_{j_1,j_2,j_3,\dots,j_{i_1-1},t_1,j_{i_1+1},\dots,j_k} - \sum_{j_1}\sum_{j_2}\dots \sum_{j_k} v_{j_1,j_2,j_3,\dots,j_k}, \\
    & l \sum_{j_1}\sum_{j_2}\dots \sum_{j_{i_2-1}}\sum_{j_{i_2+1}}\dots \sum_{j_k} v_{j_1,j_2,j_3,\dots,j_{i_2-1},t_2,j_{i_2+1},\dots,j_k} - \sum_{j_1}\sum_{j_2}\dots \sum_{j_k} v_{j_1,j_2,j_3,\dots,j_k}\rangle 
\end{align*}
}

\update{We note that for arbitrary pairs of $v_{i,j}$ and $v_{i',j'}$ with $i \neq i'$ or $j\neq j'$, since they are two different random vectors sampled from a spherical normal distribution $N(\mathbf{0}, \mathbf{I}^d)$, their dot product is $o(d)$ according to Equation~\ref{eq: inner_prod_bound}. Therefore, through some linear algebraic operations, the above formula could be reformulated as follows:
\begin{align*}
    \langle v_i, v'_j \rangle &= \frac{1}{\sigma^2} \left\langle \frac{1}{l'} \sum_{s=1}^{l'}v_{i,s} - \mu, \frac{1}{l}\sum_{t=1}^l v_{t,j} - \mu \right\rangle \\
    &= \frac{1}{\sigma^2} \left\langle \frac{1}{l'} \sum_{s=1}^{l'}v_{i,s} - \frac{1}{ll'}\sum_{t,s}v_{t,s}, \frac{1}{l}\sum_{t=1}^l v_{t,j} - \frac{1}{ll'}\sum_{t,s} v_{t,s} \right\rangle \\
    &= \frac{1}{\sigma^2 ll'} \left\langle \sum_{s=1}^{l'}v_{i,s} - \frac{1}{l}\sum_{t,s}v_{t,s}, \sum_{t=1}^l v_{t,j} - \frac{1}{l'}\sum_{t,s} v_{t,s} \right\rangle \\
    &= \frac{1}{\sigma^2 ll'} \left[ \sum_{s=1}^{l'}v_{i,s} \sum_{t=1}^l v_{t,j} - \frac{1}{l'}\sum_{s=1}^{l'}v_{i,s}\sum_{t,s}v_{t,s} - \frac{1}{l}\sum_{t=1}^{l}v_{t,j}\sum_{t,s}v_{t,s} + \frac{1}{ll'}\sum_{t,s}v_{t,s} \sum_{t,s}v_{t,s}\right] \\
    &= \frac{1}{\sigma^2 ll'} \left[ \|v_{i,j}\|^2 - \frac{1}{l'}\sum_{s=1}^{l'}\|v_{i,s}\|^2 - \frac{1}{l}\sum_{t=1}^l\|v_{t,j}\|^2 + \frac{1}{ll'}\sum_{t,s}\|v_{t,s}\|^2\right] + o(d)
\end{align*}
in which $o(d)$ is derived by applying Equation~\ref{eq: inner_prod_bound} to all the cross terms of the form $\langle v_{i,j}, v_{i',j'} \rangle$ where at least one pair of $i, i'$ and $j, j'$ are different.
}

\remove{According to \eqref{eq: inner_prod_bound}, for arbitrary pairs of $v_{j_1,\dots,j_k}$ and $v_{j_1',\dots,j_k'}$, as long as their indexes are not exactly equivalent, their dot product is $o(d)$. Therefore, through some linear algebraic operations, the above formula could be reformulated as follows:
\begin{align*}
    \langle\mu_{i_1,t_1}', \mu_{i_2,t_2}'\rangle & = \frac{1}{\sigma'l^2}
    \left(l^2 \sum_{j_1}\sum_{j_2}\dots \sum_{j_{i_1-1}}\sum_{j_{i_1+1}}\dots\sum_{j_{i_2-1}}\sum_{j_{i_2+1}}\dots \sum_{j_k} \|v_{j_1,j_2,j_3,\dots,j_{i_1-1},t_1,j_{i_1+1},\dots,j_{i_2-1},t_2,j_{i_2+1},\dots,j_k}\|_2^2 \right. \\
    &\left. - l \sum_{j_1}\sum_{j_2}\dots \sum_{j_{i_1-1}}\sum_{j_{i_1+1}}\dots \sum_{j_k} \|v_{j_1,j_2,j_3,\dots,j_{i_1-1},t_1,j_{i_1+1},\dots,j_k}\|_2^2\right.\\
    & \left.- l \sum_{j_1}\sum_{j_2}\dots \sum_{j_{i_2-1}}\sum_{j_{i_2+1}}\dots \sum_{j_k} \|v_{j_1,j_2,j_3,\dots,j_{i_2-1},t_2,j_{i_2+1},\dots,j_k}\|_2^2\right.\\
    & \left. + \sum_{j_1}\sum_{j_2}\dots \sum_{j_k} \|v_{j_1,j_2,j_3,\dots,j_k}\|_2^2\right) + o(d)
\end{align*}}

\remove{We can further simplify this expression using Lemma \ref{lemma:gaussian_bound} which says that for each vector $x$ randomly sampled from $N(\mathbf{0}, \mathbf{I}^d)$, its norm is bounded by $[\sqrt{d} - \epsilon, \sqrt{d} + \epsilon]$ with high probability, which applies to each $v_{i, j}$. Therefore, we can bound the above equation by:
\begin{align*}
    \langle\mu_{i_1,t_1}', \mu_{i_2,t_2}'\rangle & \leq \frac{1}{\sigma' l} \left(l^2 \sum_{j_1}\sum_{j_2}\dots \sum_{j_{i_1-1}}\sum_{j_{i_1+1}}\dots\sum_{j_{i_2-1}}\sum_{j_{i_2+1}}\dots \sum_{j_k} (\sqrt{d} + \epsilon)^2 \right. \\
    &\left. - l \sum_{j_1}\sum_{j_2}\dots \sum_{j_{i_1-1}}\sum_{j_{i_1+1}}\dots \sum_{j_k} (\sqrt{d} - \epsilon)^2\right.\\
    & \left.- l \sum_{j_1}\sum_{j_2}\dots \sum_{j_{i_2-1}}\sum_{j_{i_2+1}}\dots \sum_{j_k} (\sqrt{d} - \epsilon)^2\right.\\
    & \left. + \sum_{j_1}\sum_{j_2}\dots \sum_{j_k} (\sqrt{d} + \epsilon)^2\right) + o(d)\\
    & = \leq \frac{1}{\sigma' l} \left[l^k (\sqrt{d} + \epsilon)^2 - l^k (\sqrt{d} - \epsilon)^2 - l^k (\sqrt{d} - \epsilon)^2 + l^k (\sqrt{d} + \epsilon)^2 \right] + o(d) \\
    & = \frac{1}{\sigma'}\left[8l^{k-1}\sqrt{d} \epsilon\right] + o(d),
\end{align*}
}

\update{We can further simplify this expression using Lemma \ref{lemma:gaussian_bound} which says that for each vector $x$ randomly sampled from $N(\mathbf{0}, \mathbf{I}^d)$, its norm is bounded by $[\sqrt{d} - \epsilon, \sqrt{d} + \epsilon]$ with high probability, which applies to each $v_{i, j}$. Therefore, we can bound the above equation by:
\begin{align*}
    \langle v_i, v'_j \rangle &\leq \frac{1}{\sigma^2 ll'} \left[ (\sqrt{d}+\epsilon)^2 - \frac{1}{l'}l'(\sqrt{d}-\epsilon)^2 - \frac{1}{l}l(\sqrt{d}-\epsilon)^2 + \frac{1}{ll'}ll'(\sqrt{d} +\epsilon)^2 \right] o(d)\\
    &= \frac{8\sqrt{d}\epsilon}{\sigma^2 ll'} + o(d)
\end{align*}
}

Similarly, we can prove that \edit{$\langle v_i, v'_j \rangle \geq -\frac{8\sqrt{d}\epsilon}{\sigma^2 ll'} + o(d)$}, so we can conclude that

\remove{\begin{align}\label{eq: dot_prod_bound}
\langle\mu_{i_1,t_1}', \mu_{i_2,t_2}'\rangle = o(d)   
\end{align}}
\update{\begin{align}\label{eq: dot_prod_bound}
| \langle v_i, v'_j\rangle | = o(d)   
\end{align}}

Our goal is to get a bound on the cosine similarity of $v_i$ and $v'_j$ to show that it is zero. The cosine similarity is written \edit{$S_\text{cos} (v_i, v'_j) = \frac{\langle v_i, v'_j\rangle}{\|v_i\| \|v'_j\|}$, so we have a bound on the numerator, but we now want a bound on the terms in the denominator}. We can compute the norm of \remove{$\mu_{i_1,t_1}'$}\update{$v_i$ and $v'_j$} and follow the same derivation as above by leveraging Equation~\ref{eq: inner_prod_bound}, which results in:
\remove{
\begin{align*}
    \|v_i\|_2^2 & = \langle v_i, v_i\rangle \\
    &= \frac{1}{\sigma^2 ll'} \left\langle \sum_{j=1}^{l'}v_{i,j} - \frac{1}{l'}\sum_{i,j}v_{i,j}, \sum_{j=1}^{l'} v_{i,j} - \frac{1}{l'}\sum_{i,j} v_{i,j} \right\rangle \\
    &= \frac{1}{\sigma^2 ll'} \left[ \sum_{j=1}^{l'}v_{i,j} \sum_{i=1}^l v_{i,j} - \frac{1}{l'}\sum_{j=1}^{l'}v_{i,j}\sum_{i,j}v_{i,j} - \frac{1}{l}\sum_{i=1}^{l}v_{i,j}\sum_{i,j}v_{i,j} + \frac{1}{ll'}\sum_{i,j}v_{i,j} \sum_{i,j}v_{i,j}\right] \\
    &= \frac{1}{\sigma^2 ll'} \left[ \|v_{i,j}\|^2 - \frac{1}{l'}\sum_{j=1}^{l'}\|v_{i,j}\|^2 - \frac{1}{l}\sum_{i=1}^l\|v_{i,j}\|^2 + \frac{1}{ll'}\sum_{i,j}\|v_{i,j}\|^2\right] + o(d)
\end{align*}
This formula could then be lower bounded by:
\begin{align}\label{eq: norm_prod}
    \|\mu_{i_1,t_1}'\|_2^2 & \geq 2 l^{k} (d - 2\sqrt{d} \epsilon + \epsilon^2) + o(d) = 2l^k d + o(d)
\end{align}}
\update{
\begin{align*}
    \|v_i\|_2^2 & = \langle v_i, v_i\rangle \\
    &= \frac{1}{\sigma^2 l'^2} \left\langle \sum_{s=1}^{l'}v_{i,s} - \frac{1}{l}\sum_{t,s}v_{t,s}, \sum_{s=1}^{l'} v_{i,s} - \frac{1}{l}\sum_{t,s} v_{t,s} \right\rangle \\
    &= \frac{1}{\sigma^2 l'^2} \left[ \sum_{s=1}^{l'}v_{i,s} \sum_{s=1}^{l'} v_{i,s} - 2\frac{1}{l}\sum_{s=1}^{l'}v_{i,s}\sum_{t,s}v_{t,s} + \frac{1}{l^2}\sum_{t,s}v_{t,s} \sum_{t,s}v_{t,s}\right] \\
    &= \frac{1}{\sigma^2 l'^2} \left[ \sum_{s=1}^{l'} \|v_{i,s}\|^2 - \frac{2}{l}\sum_{s=1}^{l'} \|v_{i,s}\|^2 + \frac{1}{l^2}\sum_{t,s}\|v_{t,s}\|^2\right] + o(d)
\end{align*}
Similarly, we can get the following:
\begin{align*}
    \|v'_j\|_2^2 & = \frac{1}{\sigma^2 l^2} \left[ \sum_{t=1}^{l} \|v_{t,j}\|^2 - \frac{2}{l'}\sum_{t=1}^{l'} \|v_{t,j}\|^2 + \frac{1}{l'^2}\sum_{t,s}\|v_{t,s}\|^2\right] + o(d)
\end{align*}
By Lemma~\ref{lemma:gaussian_bound}, the norm of each \edit{$v_{i,j}$} is bounded by $\sqrt{d} - \epsilon$ and $\sqrt{d} + \epsilon$ with high probability, so the above formula can be bounded by:
\begin{align*}
    \frac{1}{\sigma^2 ll'}((l-1)d - (2l+6)\sqrt{d}\epsilon +(l-1)\epsilon^2) + o(d) \leq \|v_i\|_2^2 & \leq \frac{1}{\sigma^2 ll'}((l-1)d + (2l +6)\sqrt{d}\epsilon + (l-1)\epsilon^2) + o(d),
\end{align*}}

\update{
Therefore, 
\begin{align}\label{eq: norm_prod}
    \|v_i\|_2^2 = O(d)
\end{align}
and we can equivalently show that $\|v'_j\| = O(d)$.
}

\remove{
This leverages the fact that each $\|v_{j_1,j_2,j_3,\dots,j_k}\|$ is bounded by $[\sqrt{d} - \epsilon, \sqrt{d} + \epsilon]$ with high probability. The above formula also holds for $\|\mu_{i_2,t_2}'\|_2^2$. As a consequence, the cosine similarity between $\mu_{i_1,t_1}'$ and $\mu_{i_2,t_2}'$ is bounded by:
\begin{align*}
    \text{cosine}(\mu_{i_1,t_1}', \mu_{i_2,t_2}') = \frac{\langle \mu_{i_1,t_1}', \mu_{i_2,t_2}' \rangle}{\|\mu_{i_1,t_1}'\|\cdot \|\mu_{i_2,t_2}'\|} \leq \frac{o(d)}{2l^k d + o(d)},
\end{align*}
which thus approaches zero as $d$ increases.}
\update{
As a consequence, we can now calculate the cosine similarity between $v_i$ and $v'_j$:
\begin{align*}
    S_\text{cos}(v_i, v'_j) = \frac{\langle v_i, v'_j \rangle}{\|v_i\|\cdot \|v'_j\|} = \frac{o(d)}{O(d)} = o(1),
\end{align*}
which means that this converges to zero as desired.}
\end{proof}

\update{
\begin{corollary}
\label{thm:reverse-corollary}
Given Theorem~\ref{theorem}, for the representation of the composite concepts $v_{i,j}$, it can be (approximately) decomposed into the linear combinations of the representations of the base concepts (after the centering operation), $v_i, v_j$ but is orthogonal to the representations of other base concepts with high probability. In other words, compositionality holds with high probability.
\end{corollary}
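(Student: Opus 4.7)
The plan is to verify Definition~\ref{def:composition} directly by guessing the coefficients $w_i = w_j = 1$ for the decomposition $\tilde{v}_{i,j} := (v_{i,j} - \mu)/\sigma \approx v_i + v'_j$, and then confirming that every inner product of $\tilde{v}_{i,j}$ against a base concept is matched by this guess. The main engine is the same expansion used in the proof of Theorem~\ref{theorem}: write out $\hat{v}_i = \frac{1}{l'}\sum_s v_{i,s}$ (and the analogue for $v'_j$) via Lemma~\ref{lemma:base_concept_rep} and $\mu = \frac{1}{ll'}\sum_{t,s} v_{t,s}$, then apply Equation~\ref{eq: inner_prod_bound} to send every cross-term $\langle v_{t,s}, v_{t',s'}\rangle$ with $(t,s)\neq(t',s')$ to $o(d)$, and Lemma~\ref{lemma:gaussian_bound} to replace each $\|v_{t,s}\|^2$ by $d$.

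Carrying out this bookkeeping gives $\langle \tilde{v}_{i,j}, v_i\rangle = \frac{(l-1)d}{\sigma^2 ll'} + o(d)$ and $\|v_i\|^2 = \frac{(l-1)d}{\sigma^2 ll'} + o(d)$ (the latter is essentially already computed in Part~2 of the proof of Theorem~\ref{theorem}), so the least-squares coefficient of $\tilde{v}_{i,j}$ on $v_i$ is $w_i = 1 + o(1) > 0$; the symmetric computation gives $w_j = 1 + o(1) > 0$. Since $\langle v_i, v'_j\rangle = o(d)$ from Theorem~\ref{theorem}, the Gram matrix of $\{v_i, v'_j\}$ is approximately diagonal, hence the projection of $\tilde{v}_{i,j}$ onto $\mathrm{span}\{v_i, v'_j\}$ is indeed $v_i + v'_j$ with strictly positive weights, matching Definition~\ref{def:composition}. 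To dispatch the ``orthogonal to other base concepts'' clause, I would repeat the expansion for $\langle \tilde{v}_{i,j}, v_k\rangle$ with $k\neq i$ and $\langle v_i, v_k\rangle$; both come out to $-\frac{d}{\sigma^2 ll'} + o(d)$, so the hypothesized decomposition $\tilde{v}_{i,j} = v_i + v'_j + \varepsilon$ predicts $\langle \tilde{v}_{i,j}, v_k\rangle$ exactly, and the same consistency holds for $\langle \tilde{v}_{i,j}, v'_m\rangle$ with $m\neq j$. Therefore the residual $\varepsilon$ has $o(d)$ inner product with every $v_k$ and every $v'_m$, i.e.\ its component in the base-concept subspace is negligible to leading order, so no further base concepts enter the decomposition.

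The main obstacle is twofold. First is the careful bookkeeping of $o(d)$ remainders as they propagate through centering, normalization, and the several layers of nested sums; one must check that no cancellation conspires to inflate an $o(d)$ error into a $\Theta(d)$ term. Second is the correct reading of ``orthogonal to other base concepts'': because within an attribute the base concepts are themselves correlated (indeed $\sum_k v_k = 0$ by Part~1 of the proof of Theorem~\ref{theorem}), the clause cannot literally mean zero inner product, and must instead be understood as ``zero coefficient in the decomposition modulo the linear relations $\sum_k v_k = 0$ and $\sum_m v'_m = 0$'' -- which is exactly what the inner-product matching above establishes.
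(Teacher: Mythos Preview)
Your proposal is correct and takes a somewhat different---and in several respects more careful---route than the paper's own argument. The paper works with the \emph{uncentered} composite representation $v_{i,j}$ and computes its cosine similarity against each centered base concept $v_t$: it argues (using $\mu\approx 0$) that this cosine is $O(1)$ when $t=i$ and $o(1)$ when $t\neq i$, and then writes the decomposition $v_{i,j}\approx S_{\cos}(v_{i,j},v_i)\,v_i + S_{\cos}(v_{i,j},v'_j)\,v'_j$ directly in terms of these cosines, treating $\{v_i,v'_j\}$ as an approximately orthonormal basis. By contrast, you center $v_{i,j}$ to $\tilde v_{i,j}$, compute the exact leading terms of $\langle\tilde v_{i,j},v_i\rangle$ and $\|v_i\|^2$ (both $\tfrac{(l-1)d}{\sigma^2 ll'}$), and read off the least-squares coefficients $w_i=w_j=1+o(1)$ explicitly. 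Your treatment of the ``orthogonal to other base concepts'' clause is also sharper: you observe that $\langle\tilde v_{i,j},v_k\rangle$ and $\langle v_i,v_k\rangle$ are both $-\tfrac{d}{\sigma^2 ll'}+o(d)$ (not $o(d)$), so the right reading is that the \emph{residual} $\varepsilon=\tilde v_{i,j}-v_i-v'_j$ has $o(d)$ inner product with every base vector, which is exactly the statement that no further base concept receives nonzero coefficient modulo the within-attribute relations $\sum_k v_k=0$. This is a more precise formulation than the paper's cosine-based one. The paper's version has the advantage of being phrased in the same cosine-similarity language used for concept scores throughout; your version has the advantage of producing explicit unit weights and of being robust to the fact that within-attribute base concepts are genuinely correlated.
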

}

\begin{proof}
\update{To prove this, let us consider the cosine similarity between $v_{i, j}$ and $v_t$.}

\update{According to Equation~\ref{eq: base_concept_emb}, we first compute the inner product between these two vectors, i.e.:
    \begin{align}\label{eq: decompose_correlation}
        \langle v_{i,j}, v_t\rangle = \frac{1}{l'} \sum_{n=1}^{l'} \langle v_{i,j} , v_{t,n} \rangle,
    \end{align}
}

\update{Depending on whether $t = i$ or not, there are two different cases. }

\paragraph{Case 1: $t \neq i$}

\update{Note that according to Lemma~\ref{lemma:gaussian_bound}, since $v_{i,j}$ and  $v_{t,n}$ are twowvectors randomly sampled from the spherical normal distribution, their inner product is $o(d)$. Therefore, the above inner product between $v_{i,j}$ and $v_{t}$ becomes:
\begin{align*}
    \langle v_{i, j}, \mu_{t}\rangle = o(d).
\end{align*}
}

\update{Also note that according to Equation~\ref{eq: centered_transform}, $v_t = \frac{\hat{v_t} - \mu}{\sigma}$, we thus need to leverage this equation to derive the inner product between $v_{i,j}$ and $v_t$. Furthermore, according to \eqref{eq: all_mean}, $\mu$ is the mean of all the representations of the composite concepts, which are all randomly sampled from a spherical normal distribution. Therefore, $\mu$ is approaching 0 with high probability and thus the following equation holds with high probability:
\begin{align*}
    \langle v_{i,j}, v_t\rangle = \langle v_{i,j}, \frac{\hat{v_t} - \mu}{\sigma} \rangle = \langle v_{i,j}, \frac{\hat{v_t}}{\sigma} \rangle = o(d), t \neq i.
\end{align*}
}

\update{In addition, according to Lemma~\ref{lemma:gaussian_bound} and Equation~\ref{eq: norm_prod}, the norms of $v_{i,j}$ and $v_t$ are both $O(\sqrt{d})$. Therefore, the cosine similarity between $v_{i,j}$ and $v_t$ :
\begin{align*}
    \text{cosine}(v_{i,j}, v_t) = \frac{\langle v_{i,j}, v_t\rangle}{\|v_{i,j}\|\cdot\|v_t\|} = \frac{o(d)}{\|v_{i,j}\|\cdot\|v_t\|} = \frac{o(d)}{O(d)} = o(1).
\end{align*}
}

\update{Intuitively speaking, this indicates that for the representation of a composite concept $v_{i,j}$, it is not correlated with the representation of a base concept that does not appear in this composite concept with high probability. For example, this could mean that the representation of the composite concept $\{c_{\text{red}},c_{\text{sphere}}\}$ is not correlated to the representation of the concept $c_{\text{blue}}$, which is intuitively true. }

\paragraph{Case 2: $t=i$} 
\update{In Equation~\ref{eq: decompose_correlation}, according to Lemma~\ref{lemma:gaussian_bound}, the inner product between $v_{i,j}$ and most $v_{t,m}$ is $o(d)$ except when $j=m$. Therefore, Equation~\ref{eq: decompose_correlation} becomes:
\begin{align*}
    \langle v_{i,j}, v_t\rangle = \|v_{i,j}\|_2^2 + o(d),
\end{align*}
}

\update{
Then according to Lemma~\ref{lemma:gaussian_bound}, since $\|v_{i,j}\|$ is approaching $\sqrt{d}$, then the above formula is transformed to:
\begin{align*}
    \langle v_{i,j}, v_t\rangle = O(d),
\end{align*}
}

\update{
Then according to Lemma~\ref{lemma:gaussian_bound} and Equation~\ref{eq: norm_prod}, the norms of $v_{i,j}$ and $v_t$ are both $O(\sqrt{d})$. Therefore, the cosine similarity between $v_{i,j}$ and $v_t$ is:
\begin{align*}
    \text{cosine}(v_{i,j}, v_t) = \frac{\langle v_{i,j}, v_t\rangle}{\|v_{i,j}\|\cdot\|v_t\|} = \frac{O(d)}{\|v_{i,j}\|\cdot\|v_t\|} = \frac{O(d)}{O(d)} = O(1),
\end{align*}
which is thus a nonzero value.
}

\update{As indicated by the above analysis, we can conclude that each $v_{i,j}$ is only correlated to the representation of the base concepts $v_i$, and $v'_j$. Since the representations of those base concepts are from different attributes, thus orthogonal to each other, then we can regard them as the basis vectors in the vector space, which can then be linearly combined to approximately reconstruct $v_{i,j}$, i.e.:
\begin{align*}
    v_{i,j} = \text{cosine}(v_{i,j}, v_i)v_i + \text{cosine}(v_{i,j}, v'_j)v'_j
\end{align*}
This thus matches the definition of the compositionality (see Definition \ref{def:composition}). }

\end{proof}

\edit{
\begin{theorem}
    For some dataset, consider two attributes $A$ and $A'$ where we have $l$ concepts for $A$, $c_1,\dots,c_l$, and $l'$ concepts for $A'$, $c'_1,\dots,c'_{l'}$. Define normalized concept representations $v_1,\dots,v_l$ and $v'_1,\dots,v'_{l'}$ for the concepts in $A$ and $A'$ such that $v_i$ is orthogonal to $v'_j$ for all $i$ and $j$ and for $v_i$ and samples $x$ and $x'$ such that $x$ has concept $c_i$ and $x'$ does not, then $S_\text{cos}(x, v_i) > S_\text{cos}(x', v_i)$. Then the concept representations are compositional.
\end{theorem}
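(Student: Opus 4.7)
The plan is to show that the composite concept representation $v_{i,j}$ lies (up to a residual that does not matter for compositionality) in the span of $v_i$ and $v'_j$, and that its coefficients in this basis are positive. I would begin by taking $v_{i,j}$ to be the centered mean of sample embeddings that carry both $c_i$ and $c'_j$, matching the construction in Lemma~\ref{lemma:base_concept_rep}. Since $v_i \perp v'_j$ by hypothesis, I can extend $\{v_i, v'_j\}$ to an orthonormal basis of $\sR^d$ and write the decomposition
\begin{equation*}
v_{i,j} \;=\; \alpha_{i,j}\,v_i + \beta_{i,j}\,v'_j + r_{i,j},
\end{equation*}
where $r_{i,j}$ lies in the orthogonal complement of $\mathrm{span}(v_i, v'_j)$ and $\alpha_{i,j} = \langle v_{i,j}, v_i\rangle$, $\beta_{i,j} = \langle v_{i,j}, v'_j\rangle$.

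The next step is to establish $\alpha_{i,j} > 0$ and $\beta_{i,j} > 0$ from the discriminativity hypothesis. Because every sample contributing to $v_{i,j}$ carries concept $c_i$, and the discriminativity condition says $S_{\cos}(x, v_i) > S_{\cos}(x', v_i)$ for any $x$ having $c_i$ and any $x'$ lacking it, averaging over the set of samples with composite $\{c_i, c'_j\}$ yields a strictly larger cosine similarity with $v_i$ than the dataset mean (which after centering is zero). Hence $\langle v_{i,j}, v_i\rangle > 0$, giving $\alpha_{i,j} > 0$. The symmetric argument applied to $v'_j$ gives $\beta_{i,j} > 0$. This already recovers the concept-score form of compositionality in Lemma~\ref{lemma:comp-score}, since the score of the composite along $v_i$ (resp.\ $v'_j$) reduces to a positive multiple of the score along the corresponding base representation, and any other orthogonal base representation (such as $v_k$ for $k\ne i$, by the orthogonality to $v'_j$ carried over from Theorem~\ref{theorem}, or any $v'_m$ with $m\ne j$) contributes nothing.

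The main obstacle is the residual $r_{i,j}$: Definition~\ref{def:composition} demands exact equality $v_{i,j} = w_{c_i}v_i + w_{c_j}v'_j$, so I must argue $r_{i,j}$ is zero or absorb it into the concept representation itself. I would handle this by noting that $r_{i,j}$ is orthogonal to every base concept representation in attributes $A$ and $A'$: within $A$, all $v_k$ differ from $v_i$ only along directions inside $\mathrm{span}(\{v_k\}_{k=1}^{l})$, and any component of $v_{i,j}$ orthogonal to $\mathrm{span}(v_i, v'_j)$ that is still inside the attribute-spanned subspace would contradict discriminativity against some other $c_k$; similarly for $A'$. Thus $r_{i,j}$ is unobservable through any base concept score and can be discarded without affecting the concept-based decomposition in Definition~\ref{def:decomp}. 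Setting $w_{c_i} := \alpha_{i,j}$ and $w_{c_j} := \beta_{i,j}$ then yields the required identity and concludes the proof.
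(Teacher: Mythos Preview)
Your approach diverges from the paper's in a crucial way. The paper does not take $v_{i,j}$ to be the empirical centered mean and then try to decompose it; instead it \emph{defines} the composite representation as $v_{i,j} := v_i + v'_j$ and checks that this object is itself a valid (i.e.\ discriminative) concept representation for $c_{i,j}$. Using orthogonality and unit norms, one computes
\[
S_{\cos}(x, v_i + v'_j)
= \frac{\langle x, v_i\rangle + \langle x, v'_j\rangle}{\|x\|\sqrt{2}}
= \tfrac{1}{\sqrt{2}}\bigl(S_{\cos}(x, v_i) + S_{\cos}(x, v'_j)\bigr),
\]
and then the discriminativity of $v_i$ and $v'_j$ separately gives $S_{\cos}(x, v_{i,j}) > S_{\cos}(x', v_{i,j})$ whenever $x$ has $c_{i,j}$ and $x'$ does not. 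Compositionality in the sense of Definition~\ref{def:composition} is then immediate with $w_{c_i}=w_{c'_j}=1$, and there is no residual to worry about.

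By contrast, your route has a genuine gap at the residual step. Nothing in the hypotheses forces the empirical mean $v_{i,j}$ to lie in $\mathrm{span}(v_i, v'_j)$: orthogonality is only assumed between attributes, and discriminativity is a ranking condition on cosine similarities, not a constraint on where sample means sit in $\sR^d$. The assertion that a nonzero component of $r_{i,j}$ ``inside the attribute-spanned subspace would contradict discriminativity against some other $c_k$'' does not follow---a ranking inequality on cosine similarities does not preclude $v_{i,j}$ from having nonzero projection onto $v_k$ for $k\neq i$, nor onto directions outside all attribute spans. Saying $r_{i,j}$ is ``unobservable through any base concept score'' and therefore may be ``discarded'' does not meet Definition~\ref{def:composition}, which demands an exact identity. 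A secondary issue: your argument for $\alpha_{i,j}>0$ conflates the average of cosine similarities with the inner product of the averaged embedding with $v_i$; these differ by per-sample norm factors, so the step ``averaging \ldots yields a strictly larger cosine similarity \ldots hence $\langle v_{i,j}, v_i\rangle > 0$'' would need an additional normalization assumption to go through. The fix is to adopt the paper's constructive definition of $v_{i,j}$ rather than trying to recover it from data.
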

\begin{proof}
    Let $v_i$ be the concept representation for $c_i$ and $v'_j$ be the concept representation for $c'_j$. We are given that for any two samples $x$ and $x'$ with and without concept $c_i$ respectively, $S_\text{cos}(x, v_i) > S_\text{cos}(x', v_i)$ and similarly for any two samples $x$ and $x'$ with and without concept $c'_j$ respectively, $S_\text{cos}(x, v'_j) > S_\text{cos}(x', v'_j)$. We will show that a concept representation for $c_{i, j}$, the composition of concept $c_i$ and $c'_j$, exists and is represented by $v_{i,j}=v_i+v'_j$.

    Let $v_{i,j} = v_i + v'_j$. We will show that this concept can perfectly rank samples with the concept $c_{i,j}$. Since $v_i$ and $v'_j$ result in perfect rankings, for all $x, x'$ such that $x$ has $c_i$ and $x'$ does not, $S_\text{cos}(x, v_i) - S_\text{cos}(x', v_i) > 0$. Similarly, for any $x, x'$ such that $x$ has $c'_j$ and $x'$ does not, $S_\text{cos}(x, v'_j) - S_\text{cos}(x', v'_j) > 0$.

    Now let, $x, x'$ be such that $x$ has concept $c_{i,j}$ and $x'$ does not. We can write the following:
    \begin{align*}
         S_\text{cos}(x, v_i + v'_j) &= \frac{\langle x, v_i + v'_j \rangle}{\|x\|\|v_i +v'_j\|}\\
         &= \frac{\langle x, v_i\rangle + \langle x, v'_j \rangle}{\|x\|\sqrt{2}} & \text{Since $\langle v_i,v'_j\rangle = 0$, $\langle v_i,v_i\rangle = 1$, and $\langle v'_j,v'_j\rangle = 1$}\\
         &= \frac{1}{\sqrt{2}} (S_\text{cos}(x, v_i) + S_\text{cos}(x, v'_j))
    \end{align*}
    Therefore, we can now show that the concept score for the composed concept is larger for $x$ than $x'$:
    \begin{align*}
        S_\text{cos}(x, v_i + v'_j) - S_\text{cos}(x', v_i + v'_j) &= \frac{1}{\sqrt{2}} (S_\text{cos}(x, v_i) + S_\text{cos}(x, v'_j)) - \frac{1}{\sqrt{2}} (S_\text{cos}(x', v_i) + S_\text{cos}(x', v'_j))\\
        &= \frac{1}{\sqrt{2}} \left( (S_\text{cos}(x, v_i) - S_\text{cos}(x', v_i)) + (S_\text{cos}(x, v'_j) - S_\text{cos}(x', v'_j))\right)\\
        &> 0.
    \end{align*}
\end{proof}
}

\section{Compositionality of Ground-Truth Concepts}
The cosine similarities between concepts is shown for the CUB-sub and Truth-sub datasets in Figure~\ref{fig:gt-cosine}. We see similar findings as in Figure~\ref{fig:clevr-gt}.
\label{sec:comp-gt}
\begin{figure}
    \centering
    \begin{subfigure}[b]{0.4\textwidth}
        \includegraphics[width=\columnwidth]{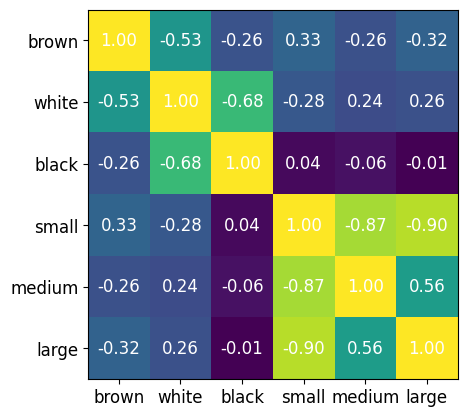}
        \caption{\cubsub{}}
        \label{fig:cubsub-gt}
    \end{subfigure}
    \hfill
    \begin{subfigure}[b]{0.4\textwidth}
        \includegraphics[width=\columnwidth]{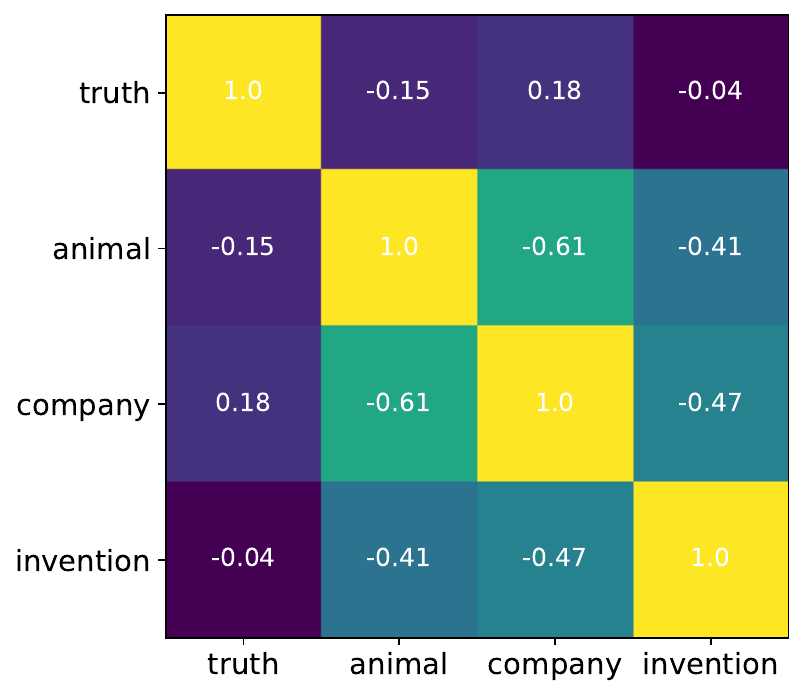}
        \caption{\nlpsub{}}
        \label{fig:truth-gt}
    \end{subfigure}
    \hfill
    \caption{Compositionality of Ground-Truth Concepts for the \cubsub{} and \nlpsub{} datasets.}
    \label{fig:gt-cosine}
\end{figure}

\section{Qualitative Examples}
We provide additional qualitative results for the CUB dataset in Figure~\ref{fig:cub_extra_qual}
and the ImageNet \citep{ILSVRC15} validation set in Figure~\ref{fig:imagenet_qual}. The concepts are named by manually looking at the top 20 images for each concept and coming up with a short description which is as specific as possible to the images while being general enough to apply to each image.

As an alternative to manual concept labelling, we also experimented with using a vision-text language model to automatically name concepts from their top 20 examples. We used GPT-4o \citep{gpt4o} to get concept labels. For each concept, we produce a single image containing the top 20 samples for the concept and we pass the image to GPT-4o with the following prompt:
\begin{verbatim}
You are given 20 images representing a single concept and your task is to label
the name of the concept from just the 20 images. First, output a detailed caption
for each image. Then output a concept name which is specific to the images but
summarizes what is common among all of them. For example, for images of red cars
in different environments and positions, the concept name could be 'Red cars'.
Output the name of the concept after 'Concept Name:'. 
\end{verbatim}

The labels for the additional CUB examples in Figure~\ref{fig:cub_extra_qual} are the following where each line labels a row of the figure:
\begin{verbatim}
Hummingbirds, Birds, Hummingbirds
Black Birds, Birds in Natural Habitats, Black Birds
Wrens, Birds with food in their beaks, Wrens
Seagulls, Birds with food in their beaks, Birds with fish in their beaks
\end{verbatim}

Similarly, the labels from GPT-4o for Figure~\ref{fig:imagenet_qual} are the following:
\begin{verbatim}
Dogs, Sleeping in various environments, Sleeping Dogs
Reptiles and Amphibians in Natural Habitats, Pairs of Dogs, Pairs of Animals
Wild Animals, Pairs of Dogs, Animals in Pairs
Waterfront Structures and Transportation,
    Outdoor Activities and Wildlife,
    British Heritage and Infrastructure
Tools and Objects in Close-Up,
    Laboratory and Scientific Equipment,
    Vintage and Everyday Objects
\end{verbatim}

\begin{figure}
    \centering
    \includegraphics[width=0.75\textwidth]{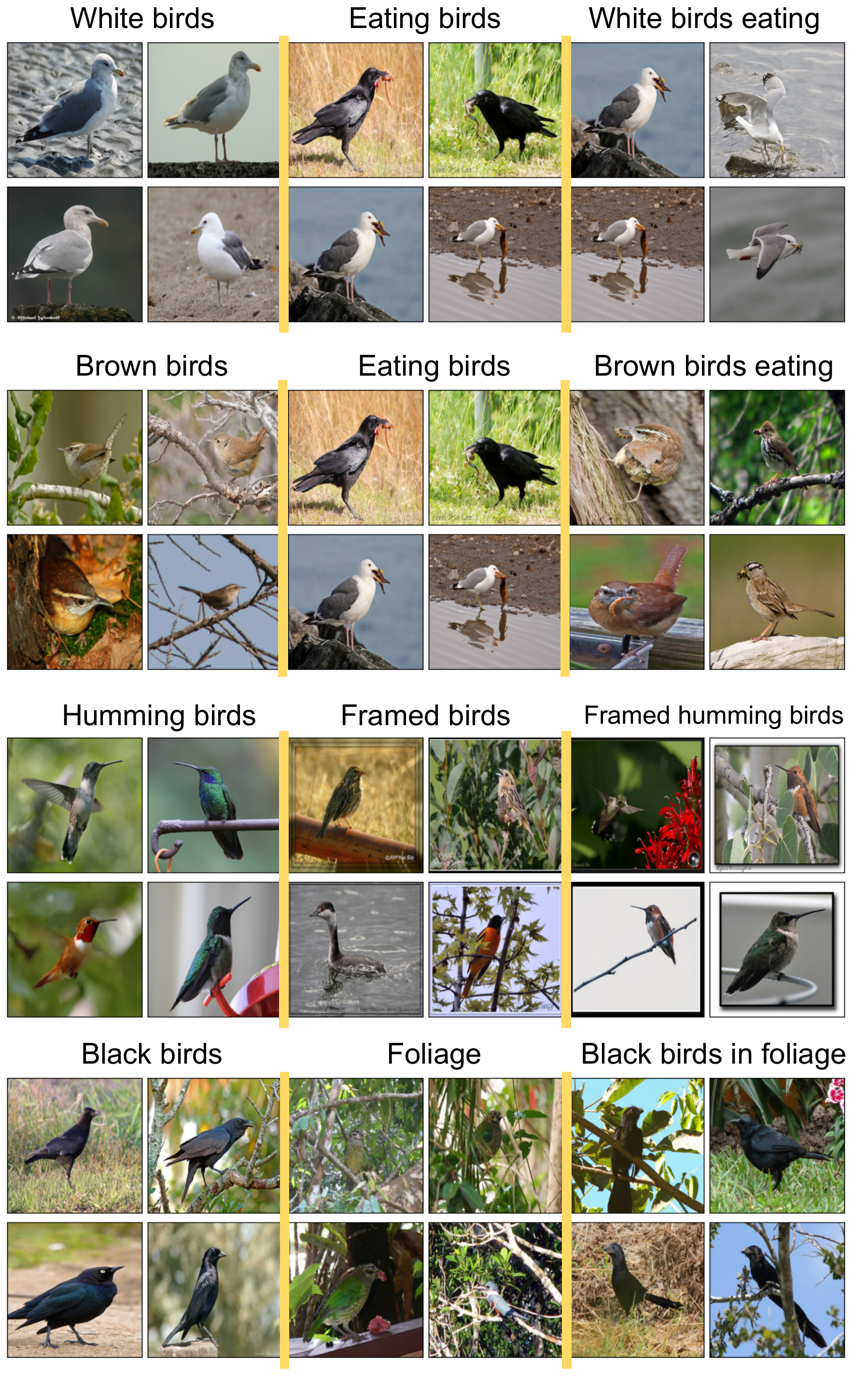}
    \caption{Additional CUB qualitative examples.}
    \label{fig:cub_extra_qual}
\end{figure}

\begin{figure}
    \centering
    \includegraphics[width=0.60\textwidth]{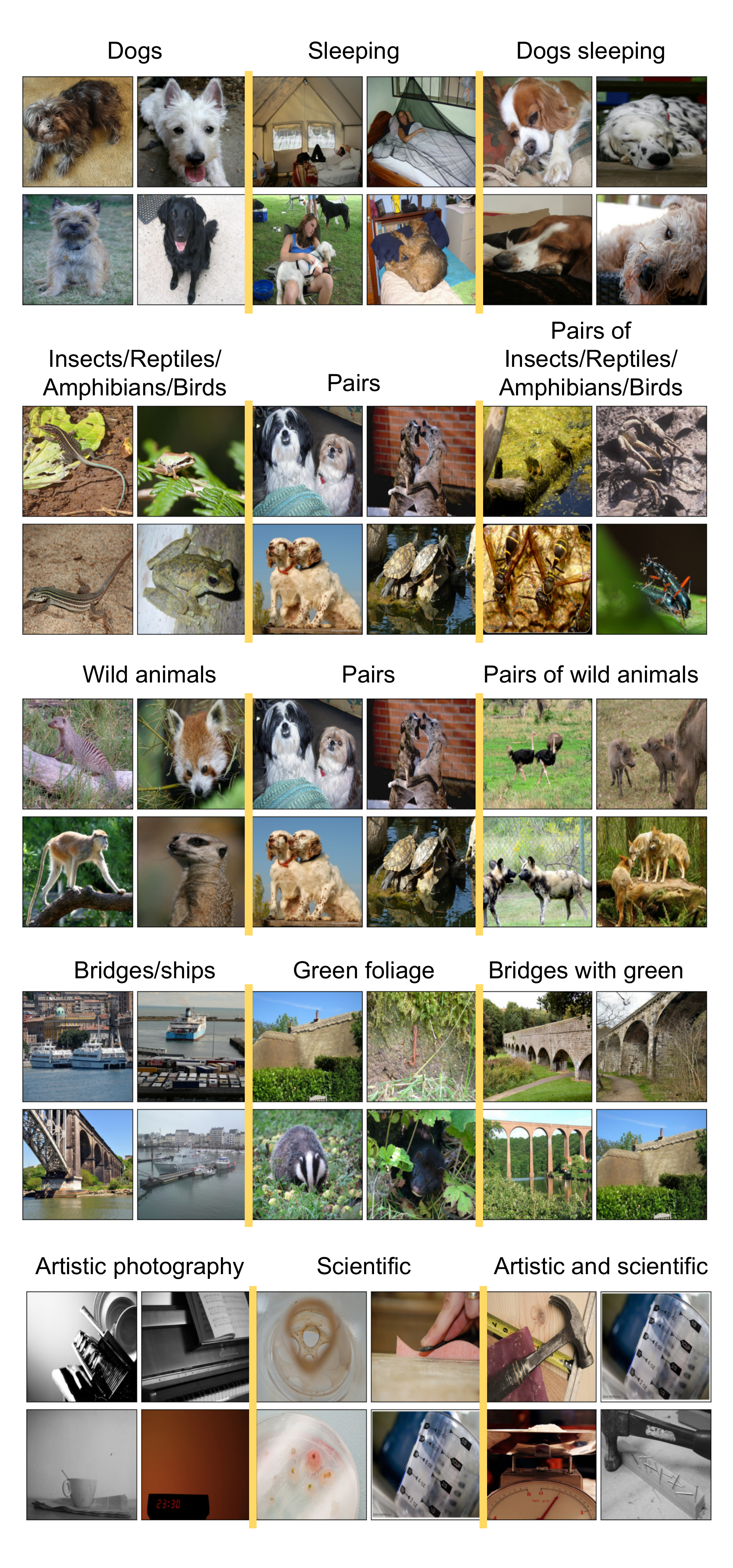}
    \caption{ImageNet qualitative examples.}
    \label{fig:imagenet_qual}
\end{figure}

\section{Additional quantitative results}

\subsection{Runtime analysis}
\begin{table}[h]
    \centering
    \caption{Runtimes in seconds}
    \label{tab:hyperparams}
    \begin{tabular}{lrrrrrr}
    \toprule
        Dataset & PCA & ACE & DictLearn & SemiNMF & CT & \ourmethod{} \\ \hline
        CLEVR & 0.10 $\pm$ 0.12 & 0.02 $\pm$ 0.00 & 28.65 $\pm$ 0.29 & 8.13 $\pm$ 1.03 & 63.66 $\pm$ 0.73 & 190.98 $\pm$ 2.38 \\
        CUB-sub & 0.11 $\pm$ 0.15 & 0.03 $\pm$ 0.01 & 14.38 $\pm$ 0.15 & 3.99 $\pm$ 0.09 & 6.89 $\pm$ 0.15 & 112.73 $\pm$ 2.67 \\
        CUB & 0.84$\pm$0.06 & 0.46$\pm$0.03 & 51.53$\pm$1.51 & 25.85$\pm$0.22 & 495.49$\pm$10.81 & 207.17$\pm$0.70 \\
        Truth-sub & 0.16 $\pm$ 0.03 & 0.06 $\pm$ 0.02 & 43.36 $\pm$ 4.35 & 29.83 $\pm$ 0.62 & 165.06 $\pm$ 1.21 & 316.45 $\pm$ 2.63 \\
        Truth & 1.10 $\pm$ 0.16 & 2.64 $\pm$ 0.09 & 88.81 $\pm$ 6.54 & 194.67 $\pm$ 10.18 & 712.16 $\pm$ 7.70 & 1574.88 $\pm$ 17.68 \\
        HAM & 1.89$\pm$0.03 & 2.97$\pm$0.03 & 367.67$\pm$8.71 & 165.80$\pm$2.22 & 693.73$\pm$1.88 & 7460.52$\pm$47.95 \\
        News & 3.28$\pm$0.72 & 25.75$\pm$2.39 & 241.75$\pm$38.70 & 934.69$\pm$117.66 & 431.78$\pm$7.11 & 7947.31$\pm$70.64 \\
    \bottomrule
    \end{tabular}
\end{table}

\subsection{Downstream performance error bars}
We include error bars for the downstream performance results using the greatest number of concepts in Table~\ref{tab:error_bars}.

\remove{
\begin{table}[h]
    \centering
    \caption{Error bars of the downstream performance (\%)}
    \label{tab:error_bars}
    \begin{tabular}{lrrrrrrr}
    \toprule
         Method & CUB & Truth & HAM & News\\
         \midrule
pca & 72.71$\pm$0.01 & \textbf{86.70$\pm$0.00} & 77.42$\pm$0.01 & \\
ace & 74.99$\pm$0.06 & 85.83$\pm$0.00 & 78.67$\pm$0.12 &\\
dictlearn & 75.33$\pm$0.07 & 85.97$\pm$0.00 & 79.65$\pm$0.01 &\\
seminmf & 75.81$\pm$0.11 & 86.55$\pm$0.00 &76.30$\pm$0.03 &\\
ct & 65.60$\pm$0.12 & 74.12$\pm$0.00 & 72.71$\pm$0.06 &\\
\ourmethod{} & \textbf{76.49$\pm$0.47} & 84.93$\pm$0.01 & \textbf{80.05$\pm$0.01}\\
    \bottomrule
    \end{tabular}
\end{table}
}

\begin{table}[h]
    \centering
    \caption{Error bars of the downstream performance (\%). Three decimal places are given when  necessary to show non-zero standard deviation.}
    \label{tab:error_bars}
    \begin{tabular}{lrrrrrrr}
    \toprule
         Method & CUB & Truth & HAM & News\\
         \midrule
PCA & 72.71$\pm$0.01 & 87.137$\pm$0.000 & 77.42$\pm$0.01 & 62.029$\pm$0.001\\
ACE & 74.99$\pm$0.06 & 87.161$\pm$0.001 & 78.67$\pm$0.12 & 57.019$\pm$0.004\\
DictLearn & 75.33$\pm$0.07 & 87.500$\pm$0.002 & 79.65$\pm$0.01 & 61.015$\pm$0.002\\
SemiNMF & 75.81$\pm$0.11 & 87.355$\pm$0.001&76.30$\pm$0.03 & \textbf{62.215$\pm$0.002}\\
CT & 65.60$\pm$0.12 & 84.520$\pm$0.004& 72.71$\pm$0.06 & 47.207$\pm$0.007\\
\ourmethod{} & \textbf{76.49$\pm$0.47} & \textbf{87.888$\pm$0.001} & \textbf{80.05$\pm$0.01} & 61.670$\pm$0.003\\
    \bottomrule
    \end{tabular}
\end{table}

\subsection{Ablation on regularization in \ourmethod{}}
To see the impact of the regularization step in the LearnSubspace step of \ourmethod{}, we performan an additional ablation on the CLEVR dataset. We compare \ourmethod{} without this regularization step to the full implementation of \ourmethod in Table~\ref{tab:regularization_ablate}, and we see that regularization improves all three metrics.

\begin{table*}
    \centering
    \caption{Regularization ablation on CLEVR.}
    \label{tab:regularization_ablate}
    \begin{tabular}{lrrr}
    \toprule
    Method & MAP & Comp. Score & Mean Cosine\\
    \midrule
    \ourmethod{} & \textbf{1.00 $\pm$ 0.00} & \textbf{3.41 $\pm$ 0.18} & \textbf{0.99 $\pm$ 0.00}\\
    \ourmethod{}-NoReg & 0.97 $\pm$ 0.03 & 3.81 $\pm$ 0.21 & 0.78 $\pm$ 0.09
\\
    \bottomrule
    \end{tabular}
\end{table*}

\subsection{Ablation on clustering loss function}
We perform an ablation on the use of the Silhouette score as our clustering loss. Instead of Silhouette we experiment with the cross entropy loss based on the technique from \citet{caron2018deep}, but our results in Table~\ref{tab:clevr_loss_ablate} show that the Silhouette results in better compositionality.

\begin{table*}
    \centering
    \caption{Loss function ablation on CLEVR.}
    \label{tab:clevr_loss_ablate}
    \begin{tabular}{llrrr}
    \toprule
    Dataset & Loss & MAP & Comp. Score & Mean Cosine\\
    \midrule
    CLEVR & Silhouette & \textbf{1.00 $\pm$ 0.00} & \textbf{3.41 $\pm$ 0.18} & \textbf{0.99 $\pm$ 0.00} \\
CLEVR & Cross Entropy & 0.94 $\pm$ 0.08 & 3.44 $\pm$ 0.14 & 0.89 $\pm$ 0.10 \\
\midrule
Truth-sub & Silhouette & \textbf{0.56 $\pm$ 0.02} & \textbf{3.68 $\pm$ 0.01} & \textbf{0.81 $\pm$ 0.01} \\
Truth-sub & Cross Entropy & 0.50 $\pm$ 0.04 & 3.94 $\pm$ 0.04 & 0.75 $\pm$ 0.02 \\
\midrule
CUB-sub & Silhouette & \textbf{0.65 $\pm$ 0.01} & \textbf{0.48 $\pm$ 0.00} & \textbf{0.77 $\pm$ 0.01} \\
CUB-sub & Cross Entropy & 0.62 $\pm$ 0.04 & 0.49 $\pm$ 0.00 & 0.76 $\pm$ 0.01 \\
    \bottomrule
    \end{tabular}
\end{table*}

\subsection{Ablation on attribute imbalance}
\begin{figure}
    \centering
    \includegraphics[width=0.5\linewidth]{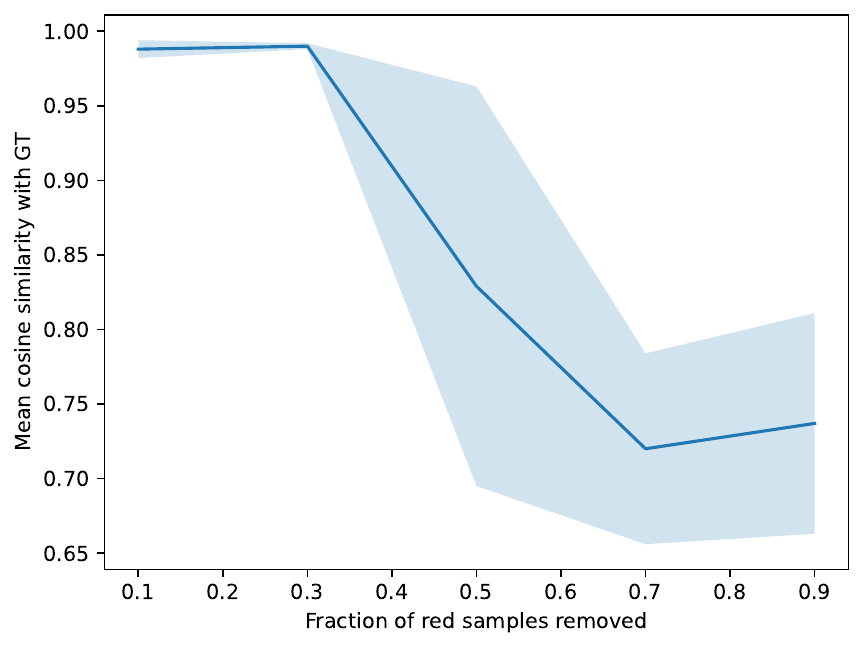}
    \caption{Cosine similarity between discovered ``red'' concept and the ground-truth ``red'' concept after removing a certain fraction of the red samples in the training set. As the attribute imbalance becomes larger, meaning there are less red samples than other colored samples, \ourmethod{} performs worse at finding the true red concept.}
    \label{fig:imbalance-ablate}
\end{figure}
We perform an ablation experiment on the effect of attribute imbalance by testing \ourmethod{}'s ability to recover the ground truth concepts on the CLEVR dataset after removing different fractions of samples labeled with the ``red'' concept. The results are shown in Figure~\ref{fig:imbalance-ablate} where we see that removing more red samples, which creates a greater imbalance, decreases the average cosine similarity of the discovered concepts with the ground truth.

\subsection{ROC-AUC Scores between Concept Representations and Ground-Truth}
\label{app:roc-auc}
The maximum ROC-AUC between the concept score and the true label for the ground-truth concepts is presented in Table~\ref{tab:clevr_auc} for CLEVR, Table~\ref{tab:cub_auc} for \cubsub{}, and Table~\ref{tab:nlp_auc} for \nlpsub{}.

\begin{table*}[h]
    \centering
    \caption{Max AUC score CLEVR v/s GT}
    \label{tab:clevr_auc}
    \begin{tabular}{l c c c c c c}
        \toprule
    
        Concepts & \ourmethod{} & ACE & ACE & PCA & DictLearn & SemiNMF \\
        \midrule
red & 1.000 & 0.765 & 0.728 & 0.985 & 0.757 & 0.793 \\
green & 1.000 & 0.771 & 0.711 & 0.996 & 0.797 & 0.818 \\
blue & 1.000 & 0.753 & 0.745 & 0.972 & 0.782 & 0.836 \\
sphere & 1.000 & 1.000 & 0.736 & 1.000 & 1.000 & 1.000 \\
cube & 1.000 & 0.998 & 0.742 & 0.971 & 0.994 & 0.999 \\
cylinder & 1.000 & 0.998 & 0.831 & 0.977 & 0.992 & 0.998 \\
(red and sphere) object & 0.987 & 0.993 & 0.911 & 0.950 & 0.978 & 0.983 \\
(red and cube) object & 0.923 & 0.999 & 1.000 & 0.965 & 0.983 & 0.999 \\
(red and cylinder) object & 0.899 & 0.940 & 0.932 & 0.964 & 0.998 & 0.943 \\
(green and sphere) object & 0.858 & 0.991 & 0.870 & 0.863 & 0.980 & 0.986 \\
(green and cube) object & 0.878 & 1.000 & 1.000 & 0.877 & 0.951 & 1.000 \\
(green and cylinder) object & 0.936 & 0.916 & 0.960 & 0.969 & 1.000 & 0.994 \\
(blue and sphere) object & 0.952 & 0.996 & 1.000 & 0.834 & 0.940 & 0.997 \\
(blue and cube) object & 0.878 & 1.000 & 1.000 & 0.973 & 0.842 & 0.978 \\
(blue and cylinder) object & 0.923 & 0.992 & 1.000 & 0.990 & 0.995 & 0.995 \\
  
        \bottomrule
    \end{tabular}
\end{table*}

\begin{table}[h]
    \centering
    \caption{ROC AUC of baseline methods on recovering the labeled concepts.}
    \label{tab:cub_auc}
    \begin{tabular}{lrrrrrr}
    \toprule
        Method & Brown & White & Black & Small & Medium & Large\\
        \midrule
        GT & 0.984 & 0.999 & 0.998 & 1.000 & 0.923 & 0.847 \\
        \midrule
        PCA & 0.881 & 0.985 & 0.931 & 0.997 & 0.886 & 0.677 \\ 
        ACE & 0.895 & 0.785 & 0.677 & 0.726 & 0.584 & 0.678 \\
        DictLearn & 0.849 & 0.645 & 0.650 & 0.702 & 0.519 & 0.551 \\
        SemiNMF & 0.086 & 0.164 & 0.099 & 0.116 & 0.066 & 0.168 \\
        CT & 0.923 & 0.837 & 0.887 & 0.926 & 0.754 & 0.736 \\
        Random & 0.867 & 0.933 & 0.855 & 0.888 & 0.849 & 0.723 \\
        \midrule
        \ourmethod{} & 0.894 & 0.834 & 0.710 & 0.743 & 0.656 & 0.661 \\
    \bottomrule
    \end{tabular}
\end{table}

\begin{table}[h]
    \centering
    \caption{ROC AUC of baseline methods on recovering the labeled concepts.}
    \label{tab:nlp_auc}
    \begin{tabular}{lrrrr}
    \toprule
         Method &  Truth & Animal & Company & Invention\\
         \midrule
         GT & 0.91 & 1.00 & 1.00 & 1.00\\
         PCA & 0.829 & 0.917 & 0.832 & 0.863\\
         ACE & 0.777 & 0.999 & 0.941 & 0.795\\
         DictLearn & 0.353 & 0.734 & 0.627 & 0.539\\
         SemiNMF & 0.759 & 0.708 & 0.629 & 0.521\\
         \ourmethod{} & 0.91 & 1.00 & 0.96 & 0.78\\
    \bottomrule
    \end{tabular}
\end{table}

\begin{table*}[h]
    \centering
    \caption{Max AUC score CLEVR v/s GT ViT}
    \label{tab:clevr_auc_vit}
    \begin{tabular}{l c c c c c}
        \toprule
    
        Concepts & \ourmethod{} & ACE & PCA & DictLearn & SemiNMF \\
        \midrule
red & 1.000 & 0.735 & 0.945 & 0.710 & 0.712 \\
green & 1.000 & 0.711 & 0.922 & 0.716 & 0.680 \\
blue & 1.000 & 0.642 & 0.995 & 0.704 & 0.629 \\
sphere & 1.000 & 0.610 & 1.000 & 1.000 & 1.000 \\
cube & 1.000 & 0.735 & 0.970 & 0.999 & 1.000 \\
cylinder & 1.000 & 0.695 & 1.000 & 1.000 & 1.000 \\
(red and sphere) object & 0.972 & 1.000 & 0.980 & 0.997 & 0.991 \\
(red and cube) object & 0.884 & 0.720 & 0.881 & 0.992 & 0.967 \\
(red and cylinder) object & 0.933 & 0.837 & 0.962 & 0.998 & 1.000 \\
(green and sphere) object & 0.904 & 1.000 & 0.923 & 0.998 & 0.985 \\
(green and cube) object & 0.913 & 0.731 & 0.886 & 0.920 & 0.937 \\
(green and cylinder) object & 0.895 & 0.660 & 0.866 & 0.988 & 0.939 \\
(blue and sphere) object & 0.939 & 0.844 & 0.970 & 0.954 & 0.949 \\
(blue and cube) object & 0.825 & 0.770 & 0.905 & 0.838 & 0.851 \\
(blue and cylinder) object & 0.854 & 0.766 & 0.842 & 0.913 & 0.875 \\
        \bottomrule
    \end{tabular}

\end{table*}

\subsection{The analysis of the cosine similarity score between learned concept representations and ground-truth}

We further break down the results reported in Table \ref{tab:cosine_mean} average cosine similarity between the learned concept representation and the ground-truth concept representations.

\subsection{Ablation studies on other pretrained models}
\label{app:ablation}
Recall that in the experiment section, we primarily focus on discovering concepts from pretrained CLIP model. In this section, we study with different choices of pretrained models, can we obtain similar results as that in Section \ref{sec:exp}? 

To answer this question, we leverage vision transformer (ViT), another widely used pretrained vision model, to repeat the experiments on CLEVR dataset. The results are summarized in Table \ref{tab:vit}-\ref{tab:resnet}. The results from these tables maintain the same trends as the one shown in Section \ref{sec:exp}.

\begin{table}[h]
    \centering
    \caption{ViT results on CLEVR}
    \label{tab:vit}
    \begin{tabular}{lrrr}
    \toprule
         Method &  MAP & Comp. Score & Mean Cosine\\
         \midrule
GT & \cellcolor{pink!100} 1.00 $\pm$ 0.00 & \cellcolor{pink!100} 3.69 $\pm$ 0.00 & \cellcolor{pink!100} 1.00 $\pm$ 0.00 \\
\midrule
PCA & \cellcolor{pink!85} 0.90 $\pm$ 0.00 & \cellcolor{pink!42} 4.33 $\pm$ 0.00 & \cellcolor{pink!42} 0.64 $\pm$ 0.00 \\
ACE & \cellcolor{pink!42} 0.70 $\pm$ 0.05 & \cellcolor{pink!28} 4.36 $\pm$ 0.11 & \cellcolor{pink!71} 0.67 $\pm$ 0.00 \\
DictLearn & \cellcolor{pink!71} 0.80 $\pm$ 0.04 & \cellcolor{pink!71} 3.98 $\pm$ 0.06 & \cellcolor{pink!85} 0.70 $\pm$ 0.01 \\
SemiNMF & \cellcolor{pink!57} 0.76 $\pm$ 0.01 & \cellcolor{pink!57} 4.29 $\pm$ 0.02 & \cellcolor{pink!57} 0.67 $\pm$ 0.00 \\
CT & \cellcolor{pink!14} 0.58 $\pm$ 0.05 & \cellcolor{pink!0} 6.26 $\pm$ 0.00 & \cellcolor{pink!14} 0.04 $\pm$ 0.01 \\
Random & \cellcolor{pink!28} 0.64 $\pm$ 0.03 & \cellcolor{pink!14} 6.26 $\pm$ 0.00 & \cellcolor{pink!28} 0.05 $\pm$ 0.00 \\
\ourmethod{} & \cellcolor{pink!100} 1.00 $\pm$ 0.00 & \cellcolor{pink!85} 3.87 $\pm$ 0.25 & \cellcolor{pink!100} 1.00 $\pm$ 0.00 \\
    \bottomrule
    \end{tabular}
\end{table}

\begin{table}[h]
    \centering
    \caption{ResNet-50 results on CLEVR}
    \label{tab:resnet}
    \begin{tabular}{lrrr}
    \toprule
         Method &  MAP & Comp. Score & Mean Cosine\\
         \midrule
GT & \cellcolor{pink!100} 0.95 $\pm$ 0.00 & \cellcolor{pink!100} 1.77 $\pm$ 0.00 & \cellcolor{pink!100} 1.00 $\pm$ 0.00 \\
\midrule
PCA & \cellcolor{pink!85} 0.90 $\pm$ 0.00 & \cellcolor{pink!28} 2.08 $\pm$ 0.00 & \cellcolor{pink!42} 0.58 $\pm$ 0.00 \\
ACE & \cellcolor{pink!71} 0.77 $\pm$ 0.04 & \cellcolor{pink!71} 1.92 $\pm$ 0.02 & \cellcolor{pink!57} 0.68 $\pm$ 0.01 \\
DictLearn & \cellcolor{pink!57} 0.71 $\pm$ 0.08 & \cellcolor{pink!57} 1.95 $\pm$ 0.11 & \cellcolor{pink!71} 0.68 $\pm$ 0.01 \\
SemiNMF & \cellcolor{pink!42} 0.64 $\pm$ 0.00 & \cellcolor{pink!42} 2.01 $\pm$ 0.01 & \cellcolor{pink!85} 0.69 $\pm$ 0.00 \\
CT & \cellcolor{pink!28} 0.63 $\pm$ 0.08 & \cellcolor{pink!14} 2.83 $\pm$ 0.00 & \cellcolor{pink!28} 0.03 $\pm$ 0.00 \\
Random & \cellcolor{pink!14} 0.57 $\pm$ 0.03 & \cellcolor{pink!0} 2.83 $\pm$ 0.00 & \cellcolor{pink!14} 0.03 $\pm$ 0.00 \\
\ourmethod{} & \cellcolor{pink!100} 0.92 $\pm$ 0.01 & \cellcolor{pink!85} 1.78 $\pm$ 0.01 & \cellcolor{pink!100} 0.96 $\pm$ 0.04 \\
    \bottomrule
    \end{tabular}
\end{table}

\begin{table}[h]
    \centering
    \caption{Cosine similarity of baseline methods for recovering the labeled concepts.}
    \label{tab:cosine_sim}
    \begin{tabular}{lrrrr}
    \toprule
         Method &  Truth & Animal & Company & Invention\\
         \midrule
         PCA & 0.367 & 0.139 & 0.688 & 0.583\\
         ACE & 0.244 & 0.956 & 0.733 & 0.642\\
         DictLearn & 0.760 & 0.988 & 0.917 & 0.879\\
         SemiNMF & 0.824 & 0.898 & 0.931 & 0.725\\
         \ourmethod{} & 0.90 & 0.94 & 0.85 & 0.64\\
    \bottomrule
    \end{tabular}
\end{table}

\section{Dataset Details}\label{app:datasets}
We provide the details for all datasets in Table~\ref{tab:datasets}.

\begin{table}[h]
    \centering
    \caption{Dataset details for all experiments}
    \label{tab:datasets}
    \begin{tabular}{lrrl}
    \toprule
         Dataset &  Total Samples & Number of GT Concepts & Modality\\
         \midrule
         CLEVR & 1001 & 6 & Image\\
         CUB &11788& NA & Image\\
         CUB-sub &261 & 6 & Image\\
         Truth & 4127 & NA & Text\\
         Truth-sub & 1125 & 4 & Text\\
         HAM &10015& NA & Image\\
         News & 18846 & NA & Text\\
    \bottomrule
    \end{tabular}
\end{table}

\section{Hyperparameters}
The hyperparameters of all experiments are given in Table~\ref{tab:hyperparams}.
\begin{table}[h]
    \centering
    \caption{Hyperparameters}
    \label{tab:hyperparams}
    \begin{tabular}{lrrrr}
    \toprule
         Dataset & $K$ & $M$ & learning rate\\
         \midrule
         CLEVR & 3 & 3 & 0.001\\
         CUB &20&5 &0.001\\
         CUB-sub &5&4&0.1\\
         Truth & 12 & 10 & 0.001\\
         Truth-sub & [4, 2, 3] & 3 & 0.001\\
         HAM &20 & 25&0.02 \\
         News & 15 & 30 & 0.001\\
    \bottomrule
    \end{tabular}
\end{table}

\end{document}